\documentclass[]{fairmeta}
\usepackage{lineno}
\usepackage{calc}
\usepackage{amsmath}
\usepackage{caption}
\usepackage{multirow}
\usepackage{amsfonts}
\usepackage{xspace}
\usepackage[font=footnotesize]{subcaption}
\usepackage{booktabs}
\usepackage{pdfpages}
\usepackage{xcolor}
\usepackage{tabularx}
\usepackage{bbm}
\usepackage{graphicx}
\usepackage{algorithm}
\usepackage{algorithmic}
\usepackage{enumitem}
\usepackage{pifont}

\usepackage{amsmath}
\usepackage{amssymb}
\usepackage{mathtools}
\usepackage{amsthm}
\usepackage{xspace}
\usepackage{algorithm}
\usepackage{algorithmic}
\usepackage{wrapfig}
\usepackage{enumitem}
\usepackage{ulem}

\usepackage[export]{adjustbox}

\newcommand{\ours}{\textsc{OmniOPD}\xspace}

\newcommand{\tabref}[1]{Table~\ref{#1}}
\newcommand{\figref}[1]{Fig.~\ref{#1}}
\newcommand{\eqnref}[1]{\text{Eq.}~(\ref{#1})}
\newcommand{\secref}[1]{\S\ref{#1}}
\newcommand{\appref}[1]{Appendix~\ref{#1}}
\newcommand{\theoremref}[1]{Theorem~\ref{#1}}

\theoremstyle{plain}
\newtheorem{theorem}{Theorem}[section]

\theoremstyle{definition}

\theoremstyle{remark}
\newtheorem{remark}[theorem]{Remark}

\title{\ours: Logit-Free On-Policy Distillation via Speculative Verification}

\author{Yuhang Zhou}
\author{Lizhu Zhang}
\author{Yifan Wu}
\author{Mingyi Wang}
\author{Bo Peng}
\author{Jiayi Liu}
\author{Xiangjun Fan}
\author{Zhuokai Zhao}

\affiliation{Meta AI}

\abstract{On-Policy Distillation (OPD) trains a student model on its own generative trajectories under dense token-level feedback from a stronger teacher, mitigating both the off-policy distribution shift of Supervised Fine-Tuning (SFT) and the sparse credit assignment of outcome-based Reinforcement Learning (RL).
However, standard OPD faces two coupled limitations. 
First, it requires direct access to the teacher's token-level logits, excluding a broad class of capable proprietary models from serving as teachers. 
Second, the token-level logit signal itself is brittle, depending on a narrow overlap of plausible next tokens between teacher and student, and prone to amplifying degenerate patterns such as repetition loops.
In this paper, we introduce \textbf{\ours}, a novel framework that addresses both limitations through a logit-free, chunk-level supervision signal.
\ours replaces deterministic logit matching with Monte Carlo rollouts that approximate the teacher's local preferences through a continuous semantic similarity metric over multi-token chunks, and concentrates this supervision via a peak-entropy scheduler that audits the student only at its high-uncertainty reasoning forks.
A Dirichlet-Multinomial Bayesian prior and a base-model KL anchor further bound the variance of discrete sampling and prevent policy collapse across unaudited tokens.
Across mathematical reasoning and competitive programming benchmarks, \ours outperforms SFT by up to +45.31\% relative on math and +18.52\% relative on code, and surpasses standard white-box OPD approach by up to +28.64\% on mathematical reasoning, confirming that chunk-level semantic verification extracts a cleaner and more reliable learning signal than token-level logit matching, whose high information density is offset by significant noise and brittleness.
Furthermore, when paired with stronger black-box teachers such as Claude-4.5-Haiku and Gemini-2.5-Flash, \ours achieves an additional +9.54\% relative on mathematical reasoning over its open-weight teacher counterpart, advancing the student past the performance of self-exploratory RL.}

\date{\today}
\correspondence{Yuhang Zhou and Zhuokai Zhao at \email{\{zyhang, zhuokai\}@meta.com}\\}

\begin{document}

\maketitle

\section{Introduction}
\label{sec:introduction}

On-Policy Distillation (OPD) has emerged as a central paradigm for post-training open-weight reasoning models~\citep{agarwal2024policy, lu2025onpolicydistillation, song2026survey}.
By having the student generate its own reasoning trajectories and querying a stronger teacher to score them, OPD combines the dense token-level supervision of Supervised Fine-Tuning (SFT) with the on-policy exploration of Reinforcement Learning (RL), mitigating both the distribution shift of imitating offline demonstrations~\citep{huan2025does} and the sparse credit assignment of outcome-based RL methods such as GRPO~\citep{shao2024deepseekmath, chen2025exploration}.

Despite these strengths, standard OPD inherits two coupled limitations that both stem from using the teacher's token-level log-probabilities as the supervision signal.
First, standard OPD approaches~\citep{agarwal2024policy, lu2025onpolicydistillation} requires direct read-out of the teacher's exact next-token logits. 
This limitation is increasingly restrictive, as a broad class of capable proprietary models, including GPT~\citep{singh2025openai}, Claude~\citep{anthropic2025system}, Gemini~\citep{pichai2025new}, and the wider commercial LLM ecosystem~\citep{aubakirova2026state}, return only text, not logits, and are therefore excluded from the OPD pipeline entirely~\citep{song2026survey}.

Second, even when full logit access is available, token-level matching is a surprisingly brittle target. 
Recent studies of OPD failure modes~\citep{li2026rethinking, luo2026demystifying, fu2026revisiting} show that the effective supervision signal lives in the narrow overlap between the teacher's and student's plausible next-token sets --- a small set of plausible next tokens that concentrates the bulk of the gradient mass.
This narrow region is fragile: it degrades when teacher and student reason in stylistically different ways, becomes uninformative or actively harmful on degenerate prefixes such as repetition loops, and is further disrupted by tokenizer mismatches between proprietary and open-weight model families~\citep{fu2026revisiting}.
The high-dimensional logit distribution offers far less reliable supervision than its information density would suggest.
This motivates our central question: 
\begin{quote}
    \centering
    \textit{Can dense on-policy distillation be performed without access to the teacher's logits, and what alternative dense supervision signal could match, or surpass, logit-based matching?}
\end{quote}

We answer both questions with \textbf{\ours}, a logit-free framework that supervises the student through a multi-token, semantic-level signal estimated from text alone.
Our approach builds on \textit{Monte Carlo estimation}~\citep{kroese2014monte}. 
Rather than requiring deterministic logit distributions, \ours approximates the teacher's local preferences by sampling multiple discrete rollouts and estimating the teacher's \textit{true} probability density from their semantic similarity to the student's generation. 
To make this computationally viable under restricted query budgets, we propose a peak-entropy \textit{chunk scheduling}. 
Instead of querying the teacher at every token, \ours selectively audits chunk of student tokens at the high-uncertainty decision boundaries in the student's reasoning trajectory, which is the point where corrective supervision is most valuable.
Finally, to counteract the high variance and zero-probability traps inherent in discrete sampling, we introduce a \textit{Bayesian smoothing step}~\citep{robbins1992empirical}. 
By anchoring the empirical Monte Carlo estimates with the student's base-model prior, \ours extracts a variance-reduced gradient signal, while a separate base-model KL anchor on un-audited tokens prevents policy collapse across the trajectory's unsupervised gaps.

Across mathematical reasoning and competitive programming benchmarks, \ours substantially outperforms SFT and, more strikingly, frequently outperforms white-box OPD~\citep{lu2025onpolicydistillation} even when full teacher logits are available. 
This finding provides empirical support for our second motivation, where a coarser but cleaner signal yields stronger gradients than the high-bandwidth but noisy token-level distribution that white-box OPD optimizes.
When paired with stronger black-box teachers such as Claude-4.5-haiku~\citep{anthropic2025system} and Gemini-2.5-flash~\citep{comanici2025gemini}, \ours further advances the student past the performance ceiling of self-exploratory GRPO~\citep{shao2024deepseekmath} on the same model. 
%
% Extensive evaluations across complex mathematical reasoning and competitive programming benchmarks validate the efficacy of our framework. 
%
% \ours achieves substantial performance gains over standard SFT across various open-weight and closed-source teacher setups, delivering up to +7.40\% gain over offline SFT on mathematical reasoning with Claude-4.5-haiku as the teacher. 
%
% Remarkably, \ours frequently surpasses full-logit white-box OPD by granting the student localized semantic flexibility rather than enforcing strict token-by-token imitation. 
%
% Furthermore, when leveraging frontier models such as Claude and Gemini as teachers, \ours decisively breaks the performance ceilings of standard GRPO empowering smaller student models to achieve extraordinary reasoning capabilities.

In summary, our main contributions are as follows:
\begin{itemize}
    % \item We formalize the problem of black-box on-policy distillation and introduce \ours, a framework that successfully bypasses the white-box logit bottleneck via semantic Monte Carlo estimation.
    \item We propose \ours, a logit-free, chunk-level OPD framework that addresses both the white-box access requirement of standard OPD and the brittleness of its token-level supervision signal.
    %
    % \item We design an efficient peak-entropy scheduling mechanism that concentrates teacher verification at the trajectory's most uncertain reasoning forks, coupled with a Bayesian smoothing mechanism and trust-region penalty to mathematically stabilize discrete sampled targets and prevent catastrophic policy collapse.
    \item We introduce three components that make \ours tractable and stable: a peak-entropy scheduler that concentrates teacher verification at the trajectory's most uncertain reasoning forks, a Dirichlet-Multinomial Bayesian prior that stabilizes the discrete Monte Carlo estimates, and a trust-region KL penalty that prevents policy collapse across un-audited tokens. 
    We provide formal guarantees on estimation error, gradient variance, and trajectory stability.
    \item We demonstrate empirically that \ours outperforms SFT by up to +45.31\% relative on mathematical reasoning and +18.52\% on competitive programming, exceeds white-box OPD by up to +28.64\% on mathematical reasoning, and, when paired with stronger black-box teachers such as Claude-4.5-haiku and Gemini-2.5-flash, advances the student past the performance ceiling of self-exploratory GRPO.
    % \item We demonstrate empirically that \ours consistently outperforms SFT, matches or exceeds white-box OPD (e.g., +4.92\% on average in mathematical reasoning), and seamlessly scales to frontier commercial systems to surpass standard RL limits.
\end{itemize}
\section{Related Work}
\label{sec:related_work}

\paragraph{Offline policy distillation of LLMs.}
The traditional paradigm for transferring complex reasoning capabilities from frontier models to smaller, more deployable models is offline knowledge distillation~\citep{hsieh2023distilling, shridhar2023distilling, kim2016sequence, hao2026self}. 
Early approaches primarily relied on sequence-level distillation from black-box APIs, querying more capable models to generate high-quality Chain-of-Thought (CoT)~\citep{wei2022chain} trajectories and training the student model to mimic these sequences via SFT~\citep{ho2022large, zhou2024teaching, lee2024mentor}. 
While effective for basic instruction following, this sequence-level imitation often fails to capture the nuanced, token-by-token reasoning dynamics of the teacher.
%
% \zz{
With the emergence of highly capable open-source models~\citep{grattafiori2024llama, abdin2024phi, yang2025qwen3, guo2025deepseek, kamath2025gemma, team2025kimi, xiao2026mimo}, the field increasingly shifted toward more fine-grained, white-box distillation techniques that directly optimize the student against the teacher's continuous output distributions via Forward Kullback-Leibler (KL) divergence or token-level logit matching~\citep{gu2024minillm, wen2023f}.
% }
%
However, whether utilizing black-box textual trajectories or white-box logits, these approaches are fundamentally off-policy: the student is trained on the teacher's static state distribution, leading to inevitable distribution shift and exposure bias during deployment.

\paragraph{On-policy distillation and its limitations.}
To mitigate the exposure bias inherent to offline methods, OPD shifts the learning objective to the student's own generated distribution~\citep{agarwal2024policy, lu2025onpolicydistillation}. 
In standard OPD frameworks, the student model actively generates reasoning trajectories that are then evaluated and scored by the teacher model, bridging the gap between training and inference distributions. 
Recent variants have expanded this paradigm to self-OPD, where a model iteratively improves through its own on-policy feedback~\citep{zhao2026self, he2026self, hubotter2026reinforcement}, as well as generalized OPD frameworks designed to handle architectural differences such as mismatched tokenizers between the student and teacher model~\citep{patino2025_unlocking}.
%
% \zz{
Despite these advancements, recent analyses have revealed that the token-level logit signal underlying standard OPD is surprisingly brittle. 
\citet{li2026rethinking} show that the effective supervision signal concentrates in the narrow overlap between teacher and student plausible next-token sets, and that a teacher with strong benchmark accuracy can fail to improve a student whose reasoning trajectories fall outside this overlap. 
\citet{luo2026demystifying} document a systematic collapse mode in which degenerate prefixes such as repetition loops accidentally receive large positive gradient signal, since locally predictable tokens are assigned high probability by the teacher. 
\citet{fu2026revisiting} further show that single-sampled-token feedback is statistically noisy, and that tokenizer mismatches between teacher and student introduce a class of spurious disagreements that no amount of logit access can resolve. 
These findings establish token-level logit matching as a structurally fragile supervision signal, motivating the chunk-level semantic approach we develop in this paper.
Beyond brittleness, standard OPD also faces a structural access barrier that computing the Reverse KL divergence between student and teacher requires explicit access to the teacher's token probabilities~\citep{yang2026learning}. 
Consequently, OPD with proprietary teachers has remained largely unexplored. 
\citet{ye2025black} recently introduced Generative Adversarial Distillation (GAD), which trains an auxiliary discriminator as an evolving reward model in a minimax game with the student.
While GAD bypasses the logit requirement, its discriminator yields only a sequence-level scalar score, providing coarser feedback than the per-step supervision of standard OPD, and requiring an additional model to train and maintain. 
\ours instead recovers dense, chunk-level supervision directly from teacher rollouts through Monte Carlo semantic estimation, without any auxiliary network, and matches or exceeds white-box logit matching when full teacher access is available.
%
% }

% Despite these advancements, standard OPD fundamentally relies on white-box teachers, as computing the Reverse KL divergence requires explicit access to the teacher's token probabilities \citep{yang2026learning}. Consequently, OPD with frontier black-box models has remained largely unexplored. While \citet{ye2025black} explores online distillation from black-box models via adversarial methods, such approaches necessitate training an auxiliary reward model and only yield sparse scalar feedback, lacking the dense optimization signals characteristic of standard OPD. 

% \ours is the first framework to bridge the gap between rigorous OPD and black-box distillation without relying on adversarial mechanisms. By mathematically estimating the token probabilities via Bayesian-smoothed Monte Carlo rollouts, our method introduces a stable, probabilistically grounded pathway to distill the active reasoning capabilities of frontier black-box models.
\section{Methodology}
\label{sec:workflow}

\subsection{Preliminaries}
\paragraph{Offline Supervised Fine-Tuning (SFT).}
The standard paradigm for transferring reasoning capabilities from a strong teacher model to a smaller student model is SFT. 
Given a dataset of instructions $x$ and corresponding reasoning trajectories $y = (y_1, y_2, \dots, y_T)$ generated by a teacher, SFT optimizes the student policy $\pi_\theta$ via maximum likelihood estimation. 
The objective minimizes the standard cross-entropy loss:
\begin{equation}
    \mathcal{L}_{\text{SFT}}(\theta) = - \mathbb{E}_{(x, y) \sim \mathcal{D}} \left[ \sum_{t=1}^{T} \log \pi_\theta(y_t \mid x, y_{<t}) \right]
\end{equation}
While efficient, offline SFT inherently suffers from exposure bias and distribution shift, as the student is trained on the teacher's static state distribution, not its own~\citep{lu2025onpolicydistillation}.

\paragraph{Online On-Policy Distillation (OPD).}
% To mitigate the distribution shift of SFT, OPD shifts the learning paradigm from imitation to active exploration. 
% \zz{
OPD addresses this exposure bias by training on the student's own generations rather than the teacher's static trajectories.
Let $\hat{y} \sim \pi_\theta(\cdot \mid x)$ be a trajectory sampled from the student policy.
Standard OPD minimizes a per-token divergence (most commonly the Reverse KL divergence) between the student and the true teacher policy $\pi_{teacher}^*$~\citep{agarwal2024policy, lu2025onpolicydistillation}:
%
% }
%
% In OPD, the student model generates its own reasoning trajectory $\hat{y} \sim \pi_\theta(\cdot \mid x)$ (i.e., operating on-policy). 
%
% The teacher model then verifies this student-generated trajectory to provide a learning signal. 
%
% Using our established notation, standard OPD minimizes a divergence—typically the Reverse KL divergence between the student's policy $\pi_\theta$ and the true, exact teacher policy $\pi_{teacher}^*$:
\begin{equation}
    \mathcal{L}_{\text{OPD}}(\theta) = \mathbb{E}_{x \sim \mathcal{D}, \hat{y} \sim \pi_\theta} \left[ \sum_{t=1}^{|\hat{y}|} D_{\text{KL}} \Big( \pi_\theta(\cdot \mid x, \hat{y}_{<t}) \parallel \pi_{\text{teacher}}^*(\cdot \mid x, \hat{y}_{<t}) \Big) \right]
    \label{eq:opd_loss}
\end{equation}
By exploring its own action space and receiving corrective feedback from the teacher distribution $\pi_{teacher}^*$, the student learns a more robust policy that bridges the gap between training and inference distributions.
%

% \paragraph{The White-Box Bottleneck.}
% A critical limitation of current OPD methodologies is their strict reliance on \textit{white-box} teacher models. 
% %
% To compute the exact KL divergence for optimization, the objective requires direct access to the teacher's continuous token-level log-probabilities, $\log \pi_{teacher}^*(y_t \mid x, y_{<t})$. 
% %
% Consequently, standard OPD frameworks cannot utilize frontier black-box models as teachers because these black-box systems do not output the required token-level logits.

\subsection{\ours}
% \zz{WIP}

\begin{figure*}[t]
    \centering
    \includegraphics[width=\textwidth]{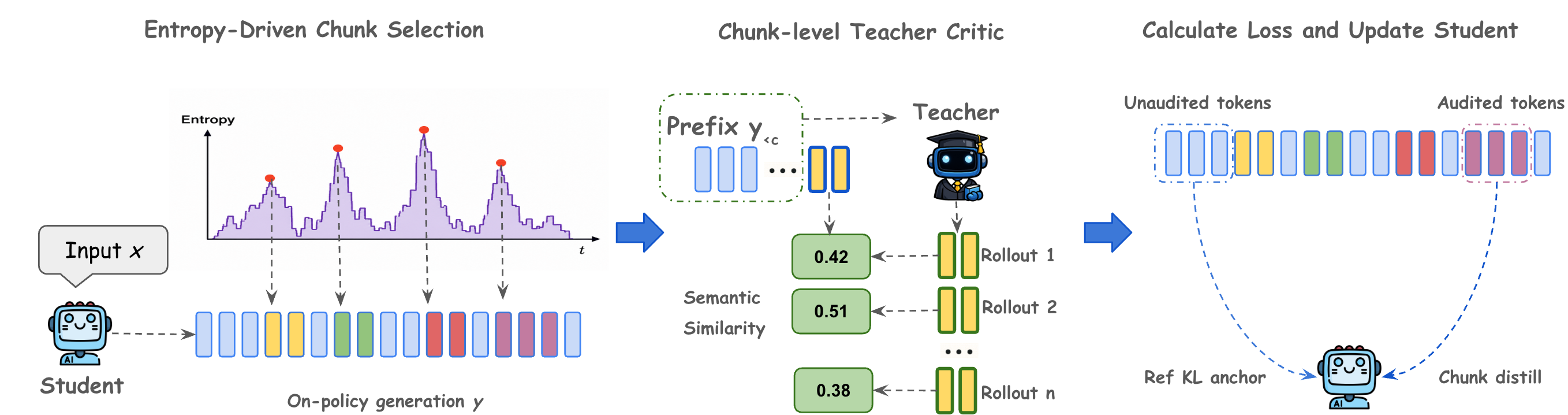}
    \caption{
        Overview of \ours. 
        \textbf{Entropy-Driven Chunk Selection:} 
        Given an input prompt $x$, the student model generates an on-policy trajectory. 
        \ours first identifies critical reasoning forks by locating local peaks in predictive entropy, selectively auditing only these high-uncertainty chunks. 
        \textbf{Chunk-level Teacher Critic:} 
        For each selected chunk, the trajectory prefix $y_{<c}$ is passed to the teacher model. 
        The teacher generates $N$ Monte Carlo rollouts, which are then evaluated against the student's generated chunk via a continuous semantic similarity metric $\phi$. 
        \textbf{Calculate Loss and Update Student:} 
        The student policy is updated by applying the distillation loss exclusively to the audited tokens. 
        Simultaneously, a reference KL anchor is applied across the unaudited tokens to prevent policy shift.
        %
        % \zz{The three component names on top are not aligned horizontally. Also we can change ``closed-source teacher'' to just ``Teacher Model'' as we expanded the claim?}
        % \zz{first column text higher than the other two}
    }
    \label{fig:omniopd_framework}
\end{figure*}

% To bypass the white-box bottleneck, a natural approach is to approximate the true teacher policy $\pi_{teacher}^*$ using only the discrete textual outputs via a Monte Carlo (MC) method. 
%
% At every generation step $t$, one could theoretically prompt the teacher to generate multiple independent rollouts and calculate the empirical frequency of the student's exact next token. 
%
% However, executing this naive token-by-token estimation across an entire reasoning trajectory of length $T$ is economically intractable due to the massive volume of teacher queries required. 
%
% Furthermore, it is statistically brittle; vocabulary and tokenization mismatches between distinct models make exact token-level matches highly improbable, yielding sparse, zero-reward gradients. 
%
% \zz{
\ours addresses both limitations of standard OPD identified in \secref{sec:introduction} through replacing the token-level logit signal with a chunk-level semantic signal estimated from text along.
As illustrated in \figref{fig:omniopd_framework}, \ours has three components: a peak-entropy scheduler that selects high-uncertainty chunks from the student's trajectory (\secref{subsubsec:chunk_selection}), a chunk-level teacher critic that estimates the teacher's preference through Monte Carlo rollouts evaluated by semantic similarity (\secref{subsubsec:speculative_verification} and~\ref{subsubsec:bayesian_smoothing}), and a trust-region anchor that prevents policy drift on un-audited tokens (\secref{subsubsec:trust_region}).
The full algorithm is presented in Algorithm~\ref{alg:OPD} in \appref{app:supp_omniopd}.
%
% }
% To resolve these dual bottlenecks, as illustrated in Figure \ref{fig:omniopd_framework}, \ours introduces a progressive, chunk-level formulation bounded by Bayesian priors and trust regions. The full algorithm is presented in Algorithm \ref{alg:OPD} in Appendix.

\subsubsection{Chunk-Level Distillation via Speculative Verification}
\label{subsubsec:speculative_verification}
% \zz{
Our logit-free distillation is built on Monte Carlo estimation of the teacher's preference.
A naive token-level instantiation --- querying the teacher at every step $t$ and estimating the empirical frequency of the student's exact next token --- is infeasible because the per-token query volume scales as $O(T)$ over reasoning trajectories of length $T$.
Tokenization differences between distinct model families may also cause the student's exact next token to not appear in any teacher rollout, producing sparse, zero-gradient updates that destabilize optimization.
To resolve these issues, \ours instead groups consecutive tokens into chunks of $C$ tokens and applies a single teacher query per chunk, reducing the query volume by a factor of $O(T/C)$ and shifting the matching signal from exact tokens to semantic similarity.
%
% }
% To resolve the severe query overhead of token-level queries, we fundamentally restructure the temporal resolution of distillation. 
%
% Instead of prompting the teacher at every generation step to evaluate individual student tokens, we apply the teacher-critic such that a single teacher query evaluates a contiguous chunk of $C$ tokens, drastically reducing the query volume by a factor of $\mathcal{O}(T/C)$.
%
This chunk-level paradigm is structurally inspired by the verification phase of Speculative Decoding~\citep{leviathan2023fast,chen2023accelerating}, where a large target model parallel-verifies a sequence drafted by a smaller model. 
In our framework, we treat each $C$-token block as a single, cohesive verification unit.

Let the student model generate an on-policy reasoning trajectory $y \sim \pi_\theta(\cdot \mid x)$. 
From this trajectory, we select $M$ non-overlapping chunks of length $C$ (the selection mechanism is later detailed in \secref{subsubsec:chunk_selection}).
For a student chunk $c$, the black-box teacher receives the historical prefix $y_{<c}$ and generates $N$ independent Monte Carlo rollouts $\{y_{\text{teacher}}^{(i)}\}_{i=1}^N$. 
Notably, as frontier teacher models and local student models rarely share the same tokenizer, even at chunk level, computing exact probability matches between teacher rollouts and the student's chunk remains infeasible.
% strict token-level probability matching is structurally impossible.
%
To resolve this tokenizer mismatch, \ours evaluates the student's chunk against the teacher rollouts using a continuous semantic similarity function $\phi : (y_c, y_{\text{teacher}}) \mapsto [0, 1]$
% $\phi \in [0, 1]$ 
(e.g., ROUGE-1 unigram overlap~\citep{lin-2004-rouge}). 
The aggregate semantic similarity $k_{\text{sem}}^{(c)}$ for chunk $c$ is then computed as:
\begin{equation}
    k_{\text{sem}}^{(c)} = \sum_{i=1}^{N} \phi(y_c, y_{\text{teacher}}^{(i)})
    \label{eq:aggregate_semantic_similarity}
\end{equation}
By shifting the measurement from discrete tokens to word-level semantics over a continuous $C$-token window, we effectively verify the student's conceptual milestones across distinct model architectures without penalizing stylistic deviations or vocabulary mismatches.

\subsubsection{Bayesian Smoothing for Sparse Signals}
\label{subsubsec:bayesian_smoothing}
While the chunk-level formulation reduces the query overhead, estimating the teacher policy from a limited sampling budget $N$ introduces high variance. 
A naive frequentist estimation $\hat{\pi}_{\text{freq}}^{(c)} = k_{\text{sem}}^{(c)}/N$ risks gradient explosion during optimization: when the teacher's sparse rollouts yield zero semantic matches, the log-loss is driven to infinity. 

To stabilize this estimation, we model the teacher's feedback using a Dirichlet-Multinomial conjugate prior. 
%
% We first compute the student's normalized probability over the chunk to serve as a structural base prior:
Taking the student's own per-chunk normalized probability as the prior:
\begin{equation}
    \bar{\pi}_\theta^{(c)} = \left( \prod_{t \in c} \pi_\theta(y_t \mid x, y_{<t}) \right)^{\frac{1}{C}}
    \label{eq:student_prior}
\end{equation}
The stabilized Bayesian target proxy $\hat{\pi}_{\text{teacher}}^{(c)}$, parameterized by prior strength $\alpha$ is then:
\begin{equation}
    \hat{\pi}_{\text{teacher}}^{(c)} = \frac{k_{\text{sem}}^{(c)} + \alpha \cdot \bar{\pi}_\theta^{(c)}}{N + \alpha}
\label{eq:bayesian_smooth_estimation}
\end{equation}
%
% Theorem~\ref{theorem:bayesian_bound} (\secref{sec:theoretical_analysis}) shows that this formulation strictly bounds the Mean Squared Error (MSE) of the proxy $\hat{\pi}_{\text{teacher}}^{(c)}$ and guarantees finite, stable gradients even under complete teacher disagreement ($k_{\text{sem}}^{(c)} = 0$), preventing catastrophic divergence during on-policy updates.
Theorem~\ref{thm:gradient_stability} demonstrates that this Bayesian proxy not only strictly bounds the Mean Squared Error (MSE) of $\hat{\pi}_{\text{teacher}}^{(c)}$, but also acts as a mathematical failsafe. By anchoring to the student's prior, it guarantees bounded, non-zero gradients even when teacher rollouts yield zero semantic matches ($k_{\text{sem}}^{(c)} = 0$), completely eliminating the risk of supervision collapse during training.

\subsubsection{Entropy-Driven Chunk Selection
}
\label{subsubsec:chunk_selection}
Under a budget of $M$ chunks per trajectory, the method by which we select these windows determines distillation efficiency. 
Reasoning trajectories oftentimes contain long segments of low-complexity transitions (e.g., formatting and deterministic arithmetic), where teacher verification provides negligible gradient signal. 
%
% Allocating our limited evaluations to these low-complexity segments wastes distillation bandwidth and provides a negligible gradient signal.
%
To maximize the marginal utility of each critic, we aim to target teacher verification at at the high-uncertainty reasoning forks where the student's policy diverges. 

As the student model $\pi_\theta$ is fully accessible, we use its internal predictive entropy as a proxy for this local uncertainty.
At each generation step $t$, we compute the entropy of the student's output vocabulary distribution:
\begin{equation}
    \mathcal{H}_t = - \sum_{v \in \mathcal{V}} \pi_\theta(v \mid x, y_{<t}) \log \pi_\theta(v \mid x, y_{<t})
\end{equation}
where $\mathcal{V}$ is the vocabulary space. 
A low entropy $\mathcal{H}_t$ indicates a deterministic transition; while high entropy indicates a complex decision boundary.

To capture these decision boundaries, we select a set $\mathcal{A}$, which consists of the $M$ tokens with the highest entropy from the trajectory, for chunk formation:
% employ a peak-anchored selection strategy, where we sort the trajectory to identify the set of $M$ anchor tokens, $A$, that exhibit the highest absolute entropy:
\begin{equation}
    \mathcal{A} = \mathop{\arg\max}\limits_{t \subset \{1 \dots T\}, |\mathcal{A}|=M} (\mathcal{H}_t)
\end{equation}
For each anchor token $t^* \in \mathcal{A}$, we extract a contiguous $C$-token chunk that encompasses this point of divergence as the audited chunk.
To maintain structural integrity, any dynamically formed chunks that overlap are merged or resampled.
By anchoring the $M$ chunks to the highest-entropy positions in the trajectory, this selection mechanism ensures that the Bayesian proxy targets ($\hat{\pi}_{\text{teacher}}^{(c)}$) are applied where the student's policy is most uncertain and requires the most correction.

\subsubsection{Trust Region Anchoring and the Total Objective}
\label{subsubsec:trust_region}
While our peak-entropy selection places the $M$ audited chunks at the most informative positions, it leaves the remaining $|\mathcal{U}| = T - M \cdot C$ tokens un-audited. 
%
% Let $U$ denote this set of un-audited tokens, where $|\mathcal{U}| = T - M \cdot C$. 
%
Optimizing a policy on these sparse, localized signals risks policy degeneration, as the student model $\pi_\theta$ might exploit structural shortcuts, generate incoherent text between audited chunks, or collapse its entropy to artificially shorten the trajectory length.

For all un-audited tokens $t \in \mathcal{U}$, we apply a KL divergence penalty against the reference policy $\pi_{\text{ref}}$, which is the frozen copy of the student's initial weights. 
By Pinsker's inequality~\citep{pinsker1964information}, minimizing $D_{KL}(\pi_{\text{ref}} \parallel \pi_\theta)$ bounds the Total Variation shift of the student's policy, acting as a trust region that constrains drift in the unsupervised regions (formalized in \theoremref{theorem:trust_region}).
Consequently, policy updates concentrate on the audited chunks while the trust region holds the un-audited tokens close to $\pi_{\text{ref}}$.
%
% Minimizing the divergence $D_{KL}(\pi_{ref} \parallel \pi_\theta)$ effectively bounds the Total Variation shift of the student's policy. 
%
% This elastic penalty acts as a robust trust region, forcing the student to maintain its foundational fluency and verbose reasoning structure across the unsupervised gaps.
%
% Consequently, the model only allows significant policy updates exactly where the Bayesian teacher explicitly provides corrective gradients.
%

\paragraph{The \ours optimization objective.}
% The final \ours framework seamlessly unifies the Bayesian-smoothed chunk distillation with the token-level trust region penalty. 
Combining chunk-level Bayesian distillation with the token-level trust region, the complete \ours objective is:
%
% Let $\beta$ denote the regularization coefficient controlling the strength of the anchor. 
%
% The complete optimization objective is formulated as:
% \begin{equation}
% \small
%     \mathcal{L}_{\ours}(\theta) = - \mathbb{E}_{\hat{y} \sim \pi_\theta} \left[ \sum_{c=1}^{M} \left( \hat{\pi}_{\text{teacher}}^{(c)} \sum_{t \in c} \log \pi_\theta(y_t \mid x, y_{<t}) \right) \right] + \beta \sum_{t \in U} D_{KL} \Big(\pi_{\text{ref}}(\cdot \mid x, y_{<t}) \parallel \pi_\theta(\cdot \mid x, y_{<t})\Big)
% \label{eq:omniopd_loss}
% \end{equation}
% \zz{I added names to terms so that the theorem section can easier refer to
\begin{equation}
    \mathcal{L}_{\ours}(\theta) = \underbrace{-\mathbb{E}_{\hat{y} \sim \pi_\theta} \!\left[ \sum_{c=1}^{M} \!\left( \hat{\pi}^{(c)}_{\text{teacher}} \sum_{t \in c} \log \pi_\theta(y_t \mid x, y_{<t}) \right) \right]}_{\mathcal{L}_{\text{chunk}}(\theta)} \;+\; \underbrace{\beta \sum_{t \in \mathcal{U}} D_{\text{KL}}\bigl(\pi_{\text{ref}}(\cdot \mid x, y_{<t}) \,\|\, \pi_\theta(\cdot \mid x, y_{<t})\bigr)}_{\text{trust-region anchor}}.
    \label{eq:omniopd_loss}
\end{equation}
% }
%
where $\beta$ controls the strength of the trust-region anchor.
By minimizing $\mathcal{L}_{\ours}$, the student model systematically aligns its critical, high-entropy reasoning junctions with the frontier black-box teacher, while the trust-region penalty ensures the global stability and coherence of the resulting long-horizon trajectory.

\section{Theoretical Analysis of \ours}
\label{sec:theoretical_analysis}
% \zz{
In this section, we provide a theoretical analysis of \ours's three core design components --- chunk-level loss formulation (\eqnref{eq:omniopd_loss}), Bayesian-smoothed estimation (\eqnref{eq:bayesian_smooth_estimation}), and trust-region anchoring on un-audited tokens --- by tying each one to a specific failure mode it prevents.
\secref{subsec:invariant_supervision} then compares the chunk-level supervision in \ours to token-level alternatives, showing that the chunk-level loss is invariant to tokenizer and stylistic variation in teacher rollouts while token-level losses are not.

\paragraph{Setup.}
We assume the semantic similarity function $\phi$ (as in \eqnref{eq:aggregate_semantic_similarity}) takes values in $[0,1]$, teacher rollouts $\{y^{(i)}_{\text{teacher}}\}_{i=1}^N$ are i.i.d.\ samples from $\pi^*_{\text{teacher}}(\cdot \mid x, y_{<c})$, the student prior $\bar{\pi}^{(c)}_\theta > 0$ (as in \eqnref{eq:student_prior}) as a geometric mean of strictly positive per-token probabilities, and the score function $\nabla_\theta \log \pi_\theta(y_t \mid \cdot)$ is bounded, which is a standard assumption for softmax-output transformer policies. 
For each chunk $c$, we define the true expected similarity as
\begin{equation}
    \mu_c := \mathbb{E}_{y' \sim \pi^*_{\text{teacher}}(\cdot \mid x, y_{<c})}\!\left[\phi(y_c, y')\right] \in [0,1].
    \label{eq:mu_c_def}
\end{equation}
The empirical frequentist estimator $\hat{\pi}^{(c)}_{\text{freq}} := k^{(c)}_{\text{sem}} / N$ is the unbiased sample-mean estimator of $\mu_c$ with no prior smoothing. 
The Bayesian proxy in \eqnref{eq:bayesian_smooth_estimation} admits the convex form
\begin{equation}
    \hat{\pi}^{(c)}_{\text{teacher}} = \frac{N}{N+\alpha}\,\hat{\pi}^{(c)}_{\text{freq}} + \frac{\alpha}{N+\alpha}\,\bar{\pi}^{(c)}_\theta.
    \label{eq:bayesian_convex_form}
\end{equation}
All expectations are conditional on the student trajectory $\hat{y}$; only teacher rollouts are random.

\subsection{Gradient Stability of \ours}
\label{subsec:gradient_stability}
Both standard OPD and a hypothetical \ours with empirical frequentist estimator produce gradient signals unstable enough to derail training.
Standard OPD's reverse-KL objective explodes when teacher mass vanishes on student-sampled tokens~\citep{fu2026revisiting}; \ours with empirical estimation collapses when teacher rollouts fail to match the student's chunk. 
Both failures concentrate on the same place where the teacher disagrees with the student, which are precisely the places that carry the most useful learning signal. 
A correct design must close both failure modes simultaneously. 
The theorem in this subsection establishes that \ours does so through two distinct mechanisms operating on two distinct components of the design.

Standard OPD's reverse-KL objective in \eqnref{eq:opd_loss} contains per-token terms of the form $\log\bigl(\pi_\theta(v) / \pi^*_{\text{teacher}}(v)\bigr)$, which diverge whenever the teacher assigns near-zero probability mass to a token the student samples. 
Recent analyses~\citep{fu2026revisiting,li2026rethinking,luo2026demystifying} document this as a real failure mode of token-level OPD. 
The chunk-level loss in \eqnref{eq:omniopd_loss} was constructed to rule out this behavior by design. 
The teacher estimator enters as a bounded multiplier in $[0,1]$ on the student's log-likelihood, not as a denominator or inside a log, so the per-chunk gradient is controlled by the student's score function alone---independent of what the estimator returns or how badly the teacher disagrees with $y_c$.

This loss change does not, however, prevent every gradient failure. 
Under the most direct logit-free estimator---the empirical sample mean $\hat{\pi}^{(c)}_{\text{freq}} = k^{(c)}_{\text{sem}}/N$---the estimator collapses to \emph{exactly} zero whenever none of the $N$ teacher rollouts achieves semantic match with the student chunk. 
A zero multiplier in \eqnref{eq:omniopd_loss} zeros out the per-chunk gradient entirely, leaving the student with no learning signal at exactly the chunks where correction is most valuable. 
The Bayesian prior in \eqnref{eq:bayesian_smooth_estimation} closes this gap. 
Because $\bar{\pi}^{(c)}_\theta > 0$ contributes a strictly positive term to the estimator at every chunk, the multiplier stays bounded away from zero regardless of the rollout outcome.
Essentially, the student's base-model confidence acts as a fallback signal when the teacher disagrees on every rollout.

The Bayesian prior does more than prevent collapse. 
The convex combination $\hat{\pi}^{(c)}_{\text{teacher}} = \tfrac{N}{N+\alpha}\hat{\pi}^{(c)}_{\text{freq}} + \tfrac{\alpha}{N+\alpha}\bar{\pi}^{(c)}_\theta$ introduces a bias-variance tradeoff parameterized by the prior strength $\alpha$: larger $\alpha$ reduces estimator noise but pulls the estimate further from the true teacher, while smaller $\alpha$ inherits the empirical frequentist estimator's variance under small $N$. 
Neither extreme is optimal---the empirical frequentist limit ($\alpha \to 0$) is dominated by sampling noise, while the pure base-prior limit ($\alpha \to \infty$) ignores teacher rollouts entirely. 
Since $\hat{\pi}^{(c)}_{\text{teacher}}$ enters \eqnref{eq:omniopd_loss} as a multiplier on the per-chunk gradient, the estimator's variance carries directly into the gradient's variance, which makes this tradeoff a matter of gradient stability rather than statistical accuracy alone.

\begin{theorem}[Gradient stability of \ours]
\label{thm:gradient_stability}
Under the chunk-level loss in \eqnref{eq:omniopd_loss} and the Bayesian path estimator from \eqnref{eq:bayesian_smooth_estimation}, the following three properties hold.

\textbf{(a) Loss design eliminates the gradient explosion of standard OPD.} 
For every realization of teacher rollouts and every estimator value $\hat{\pi}^{(c)}_{\text{teacher}} \in [0,1]$, the per-chunk gradient is bounded by the student's score function:
\begin{equation}
    \Bigl\| \hat{\pi}^{(c)}_{\text{teacher}} \sum_{t \in c} \nabla_\theta \log \pi_\theta(y_t \mid x, y_{<t}) \Bigr\| \;\leq\; \sum_{t \in c}\bigl\| \nabla_\theta \log \pi_\theta(y_t \mid x, y_{<t}) \bigr\|.
    \label{eq:bounded_chunk_gradient}
\end{equation}
In contrast, the per-token gradient under the reverse-KL objective in \eqnref{eq:opd_loss} is unbounded as $\pi^*_{\text{teacher}}(v \mid x, y_{<t}) \to 0$ for any sampled token $v$.

\textbf{(b) Bayesian prior eliminates the supervision collapse of the empirical estimator.} 
For every realization, including the worst case $k^{(c)}_{\text{sem}} = 0$,
\begin{equation}
    \hat{\pi}^{(c)}_{\text{teacher}} \;\geq\; \frac{\alpha \, \bar{\pi}^{(c)}_\theta}{N + \alpha} \;>\; 0,
    \label{eq:lower_bound_estimator}
\end{equation}
so the per-chunk gradient remains strictly nonzero. 
The empirical frequentist estimator $\hat{\pi}^{(c)}_{\text{freq}}$ does not satisfy this property: under $k^{(c)}_{\text{sem}} = 0$, the chunk's gradient contribution vanishes entirely.

\textbf{(c) Bias-variance characterization.} 
The Bayesian estimator's mean squared error admits the closed-form bias-variance decomposition
\begin{equation}
    \mathrm{MSE}\bigl(\hat{\pi}^{(c)}_{\text{teacher}}\bigr) = \underbrace{\frac{N \sigma^2_\phi}{(N+\alpha)^2}}_{\text{variance}} + \underbrace{\frac{\alpha^2 \bigl(\bar{\pi}^{(c)}_\theta - \mu_c\bigr)^2}{(N+\alpha)^2}}_{\text{squared bias}},
    \label{eq:mse_decomposition}
\end{equation}
where $\sigma^2_\phi := \mathrm{Var}_{y' \sim \pi^*_{\text{teacher}}}[\phi(y_c, y')] \leq 1/4$ by Popoviciu's inequality~\citep{popoviciu1965certaines}. 
The two terms admit a finite optimum $\alpha^*$, and the variance strictly contracts relative to the empirical frequentist baseline: $\mathrm{Var}\bigl(\hat{\pi}^{(c)}_{\text{teacher}}\bigr) = \bigl(\tfrac{N}{N+\alpha}\bigr)^2 \mathrm{Var}\bigl(\hat{\pi}^{(c)}_{\text{freq}}\bigr) < \mathrm{Var}\bigl(\hat{\pi}^{(c)}_{\text{freq}}\bigr)$.
\end{theorem}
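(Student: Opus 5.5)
The three parts are largely independent, so I will prove them one at a time, treating the student trajectory $\hat y$ as fixed throughout and using the stated Setup assumptions.

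\textbf{Part (a).} The gradient bound follows from homogeneity of the norm and the triangle inequality:
\[
\Bigl\|\hat\pi^{(c)}_{\text{teacher}}\textstyle\sum_{t\in c}\nabla_\theta\log\pi_\theta(y_t\mid\cdot)\Bigr\| = \hat\pi^{(c)}_{\text{teacher}}\,\Bigl\|\textstyle\sum_{t\in c}\nabla_\theta\log\pi_\theta(y_t\mid\cdot)\Bigr\| \le \textstyle\sum_{t\in c}\bigl\|\nabla_\theta\log\pi_\theta(y_t\mid\cdot)\bigr\|,
\]
using $0\le\hat\pi^{(c)}_{\text{teacher}}\le1$. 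I must check that the estimator actually lies in $[0,1]$. From the convex form \eqnref{eq:bayesian_convex_form}, $\hat\pi^{(c)}_{\text{freq}}=k^{(c)}_{\text{sem}}/N\in[0,1]$ because $\phi\in[0,1]$. Also $\bar\pi^{(c)}_\theta\in(0,1]$, since it is a geometric mean of probabilities. A convex combination of two numbers in $[0,1]$ stays in $[0,1]$.

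For the contrast with standard OPD, I will exhibit a family of teacher distributions with $\pi^*_{\text{teacher}}(v)=\epsilon\to0$. The reverse-KL per-token term contains $\pi_\theta(v)\log(\pi_\theta(v)/\epsilon)$. Its gradient with respect to $\theta$ includes $\nabla_\theta\pi_\theta(v)\,\bigl(\log\pi_\theta(v)-\log\epsilon+1\bigr)$. This diverges like $|\log\epsilon|$ whenever $\nabla_\theta\pi_\theta(v)\neq0$, which holds generically for softmax policies. So the gradient is unbounded.

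\textbf{Part (b).} Since $k^{(c)}_{\text{sem}}\ge0$, dropping it from the numerator of \eqnref{eq:bayesian_smooth_estimation} gives the lower bound $\alpha\bar\pi^{(c)}_\theta/(N+\alpha)$. This bound is strictly positive because $\alpha>0$ and $\bar\pi^{(c)}_\theta>0$.

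"Strictly nonzero gradient" should then be read as: the multiplier is nonzero, so the chunk gradient vanishes only if the score sum itself vanishes. I will state that caveat explicitly. For the frequentist estimator, $k^{(c)}_{\text{sem}}=0$ gives $\hat\pi^{(c)}_{\text{freq}}=0$, and the chunk term is identically zero.

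\textbf{Part (c).} Write $\hat\pi^{(c)}_{\text{teacher}}-\mu_c = \tfrac{N}{N+\alpha}(\hat\pi^{(c)}_{\text{freq}}-\mu_c)+\tfrac{\alpha}{N+\alpha}(\bar\pi^{(c)}_\theta-\mu_c)$. Conditionally on $\hat y$, the second summand is deterministic. One point needs care here: $\bar\pi^{(c)}_\theta$ depends only on the student, not on the teacher rollouts, so it is constant under the expectation.

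Squaring and taking expectations, the cross term vanishes because $\mathbb E[\hat\pi^{(c)}_{\text{freq}}]=\mu_c$, which follows from the i.i.d. rollouts. The i.i.d. assumption also gives $\mathrm{Var}(\hat\pi^{(c)}_{\text{freq}})=\sigma^2_\phi/N$. Together these yield \eqnref{eq:mse_decomposition}. Popoviciu's inequality applied to $\phi\in[0,1]$ gives $\sigma^2_\phi\le1/4$. The variance contraction factor $(N/(N+\alpha))^2<1$ for $\alpha>0$ is immediate, provided $\sigma^2_\phi>0$; otherwise the inequality is an equality, and I will note this degenerate case.

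For the optimum, set $b=(\bar\pi^{(c)}_\theta-\mu_c)^2$ and $f(\alpha)=(N\sigma^2_\phi+\alpha^2 b)/(N+\alpha)^2$. Differentiating, the numerator of $f'$ is proportional to $2\alpha bN - 2N\sigma^2_\phi$, so $\alpha^*=\sigma^2_\phi/b$. This is finite when $b>0$, and $f$ decreases before $\alpha^*$ and increases after it. When $b=0$, $f$ is decreasing and the infimum is approached as $\alpha\to\infty$. The main obstacle is exactly this edge case in the "finite optimum" claim, which needs the hypothesis $\bar\pi^{(c)}_\theta\neq\mu_c$; I will add that qualifier.
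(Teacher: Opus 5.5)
Your proposal is correct and follows the paper's proof essentially step for step. In (a) you show $\hat{\pi}^{(c)}_{\text{teacher}}\in[0,1]$ via the convex combination and apply the triangle inequality; in (b) you drop $k^{(c)}_{\text{sem}}\ge 0$ from the numerator; in (c) you do the bias--variance computation conditional on $\hat{y}$, with $\mathrm{Var}(\hat{\pi}^{(c)}_{\text{freq}})=\sigma^2_\phi/N$ and Popoviciu. You go beyond the paper by actually deriving the finite optimum $\alpha^*=\sigma^2_\phi/(\bar{\pi}^{(c)}_\theta-\mu_c)^2$, which the paper only asserts, and by flagging the qualifiers it omits: $\bar{\pi}^{(c)}_\theta\neq\mu_c$ for finiteness, $\sigma^2_\phi>0$ for strict variance contraction, and a nonzero score sum for the ``strictly nonzero gradient'' in (b).
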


\begin{proof}
\emph{(a)} Since $\hat{\pi}^{(c)}_{\text{teacher}} \in [0,1]$ as a convex combination of $\hat{\pi}^{(c)}_{\text{freq}} \in [0,1]$ and $\bar{\pi}^{(c)}_\theta \in [0,1]$, the triangle inequality together with $\hat{\pi}^{(c)}_{\text{teacher}} \leq 1$ gives \eqnref{eq:bounded_chunk_gradient}. 
For the reverse-KL contrast, the gradient
\begin{equation*}
\nabla_\theta D_{\mathrm{KL}}\bigl(\pi_\theta \,\|\, \pi^*_{\text{teacher}}\bigr) = \sum_v \nabla_\theta \pi_\theta(v) \,\log\frac{\pi_\theta(v)}{\pi^*_{\text{teacher}}(v)}
\end{equation*}
contains terms whose magnitude diverges as $\pi^*_{\text{teacher}}(v) \to 0$ for any token $v$ with $\pi_\theta(v) > 0$.

\emph{(b)} From the convex form $\hat{\pi}^{(c)}_{\text{teacher}} = (k^{(c)}_{\text{sem}} + \alpha \bar{\pi}^{(c)}_\theta)/(N+\alpha)$, the setup assumption $\bar{\pi}^{(c)}_\theta > 0$, and the non-negativity of the semantic match count $k^{(c)}_{\text{sem}} \geq 0$,
\begin{equation*}
    \hat{\pi}^{(c)}_{\text{teacher}} \;=\; \frac{k^{(c)}_{\text{sem}} + \alpha \bar{\pi}^{(c)}_\theta}{N+\alpha} \;\geq\; \frac{\alpha \bar{\pi}^{(c)}_\theta}{N+\alpha} \;>\; 0.
\end{equation*}
The empirical frequentist estimator $\hat{\pi}^{(c)}_{\text{freq}} = k^{(c)}_{\text{sem}}/N$ admits no such lower bound: when $k^{(c)}_{\text{sem}} = 0$, $\hat{\pi}^{(c)}_{\text{freq}} = 0$.

\emph{(c)} Conditional on the student trajectory $\hat{y}$, the prior $\bar{\pi}^{(c)}_\theta$ is fixed and only the empirical frequentist estimator is random. 
Since the rollouts are i.i.d.\ with $\phi(y_c, y^{(i)}_{\text{teacher}}) \in [0,1]$,
\begin{equation*}
    \mathbb{E}\bigl[\hat{\pi}^{(c)}_{\text{freq}}\bigr] = \mu_c, \quad \mathrm{Var}\bigl(\hat{\pi}^{(c)}_{\text{freq}}\bigr) = \frac{\sigma^2_\phi}{N}, \quad \sigma^2_\phi \leq \tfrac{1}{4} \text{ by Popoviciu's inequality~\citep{popoviciu1965certaines}.}
\end{equation*}
Applying the convex form from \eqnref{eq:bayesian_convex_form}, the bias and variance of the Bayesian estimator decompose as
\begin{align*}
    \mathrm{Bias}\bigl(\hat{\pi}^{(c)}_{\text{teacher}}\bigr) &= \frac{N}{N+\alpha}\mu_c + \frac{\alpha}{N+\alpha}\bar{\pi}^{(c)}_\theta - \mu_c \;=\; \frac{\alpha\bigl(\bar{\pi}^{(c)}_\theta - \mu_c\bigr)}{N+\alpha}, \\
    \mathrm{Var}\bigl(\hat{\pi}^{(c)}_{\text{teacher}}\bigr) &= \Bigl(\frac{N}{N+\alpha}\Bigr)^2 \mathrm{Var}\bigl(\hat{\pi}^{(c)}_{\text{freq}}\bigr) \;=\; \frac{N \sigma^2_\phi}{(N+\alpha)^2}.
\end{align*}
Summing squared bias and variance yields \eqnref{eq:mse_decomposition}. The variance contraction is immediate: $(N/(N+\alpha))^2 < 1$ for all $\alpha > 0$.
\end{proof}

\subsection{Rollout Sufficiency of \ours}
\label{subsec:rollout_sufficiency}
\ours's efficiency advantage over standard OPD rests entirely on the rollout budget per chunk being small. 
The per-sample cost analysis in \appref{app:cost_analysis} shows that the default configuration ($M=10$, $N=10$, $C=50$) runs at 1.75$\times$ SFT cost and the sparser variant ($M=5$, $N=10$, $C=50$) runs at 0.88$\times$, but this favorable cost profile collapses entirely if accurate estimation requires $N$ on the order of hundreds or thousands. 
The theorem in this subsection establishes that $N=10$ is principled rather than budget-constrained, by characterizing how the estimator's accuracy improves with the rollout budget $N$ and predicting the diminishing returns observed in \tabref{tab:cost_analysis}.

The remaining question is whether a sparse configuration such as $N = 10$ gives a usable signal. 
Two effects work in \ours's favor. 
First, the empirical estimator $\hat{\pi}^{(c)}_{\text{freq}}$ is a sample mean over $N$ i.i.d.\ rollouts bounded in $[0,1]$, so by Hoeffding's inequality~\citep{hoeffding1963probability} its deviation from $\mu_c$ shrinks at sub-Gaussian rate in $N$. 
Second, the Bayesian shrinkage in \eqnref{eq:bayesian_smooth_estimation} tightens this further by a factor of $N/(N+\alpha) < 1$. 
Since $\hat{\pi}^{(c)}_{\text{teacher}}$ enters \eqnref{eq:omniopd_loss} as a multiplier on the per-chunk gradient, both the estimator's accuracy and the gradient noise improve at rate $\mathcal{O}(1/N)$. 
The marginal benefit of doubling $N$ therefore halves with each doubling, predicting a sweet spot beyond which additional rollouts purchase variance reduction that the optimizer cannot exploit.

\begin{theorem}[Concentration of the chunk-level estimator]
\label{thm:concentration}
For any $\varepsilon > 0$, the Bayesian chunk-level estimator concentrates around its shrunken target $\tilde{\mu}_c := \tfrac{N}{N+\alpha}\mu_c + \tfrac{\alpha}{N+\alpha}\bar{\pi}^{(c)}_\theta$ with sub-Gaussian rate
\begin{equation}
    \Pr\!\left(\big|\hat{\pi}^{(c)}_{\text{teacher}} - \tilde{\mu}_c\big| \geq \varepsilon\right) \leq 2\exp\!\left(-\frac{2(N+\alpha)^2 \varepsilon^2}{N}\right).
    \label{eq:concentration_bound}
\end{equation}
Consequently, the estimator-induced contribution to the per-chunk gradient variance is bounded by
\begin{equation}
    \mathrm{Var}\!\left(\hat{\pi}^{(c)}_{\text{teacher}}\sum_{t \in c} \nabla_\theta \log \pi_\theta(y_t \mid x, y_{<t}) \,\Big|\, \hat{y}\right) \leq \frac{N}{4(N+\alpha)^2}\Big\|\sum_{t \in c}\nabla_\theta \log \pi_\theta(y_t \mid x, y_{<t})\Big\|^2,
    \label{eq:concentration_variance}
\end{equation}
shrinking at rate $\mathcal{O}(1/N)$ in the rollout budget.
\end{theorem}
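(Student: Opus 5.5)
The plan is to write the estimator's fluctuation as a normalized sum of independent bounded terms, and then apply Hoeffding's inequality for the tail bound and the variance computation from Theorem~\ref{thm:gradient_stability}(c) for the gradient bound. Throughout we condition on the student trajectory $\hat{y}$, as in the Setup. Under this conditioning the prior $\bar{\pi}^{(c)}_\theta$ and the score sum $g_c := \sum_{t \in c}\nabla_\theta \log \pi_\theta(y_t \mid x, y_{<t})$ are deterministic. The only randomness is in the i.i.d.\ similarities $Z_i := \phi(y_c, y^{(i)}_{\text{teacher}}) \in [0,1]$, which have mean $\mu_c$.

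\textbf{Step 1 (centering).} Using the convex form \eqnref{eq:bayesian_convex_form}, I would write
\begin{equation*}
\hat{\pi}^{(c)}_{\text{teacher}} - \tilde{\mu}_c = \frac{N}{N+\alpha}\bigl(\hat{\pi}^{(c)}_{\text{freq}} - \mu_c\bigr) = \sum_{i=1}^{N} \frac{Z_i - \mu_c}{N+\alpha}.
\end{equation*}
The prior terms cancel exactly because they are deterministic. Hence $\tilde{\mu}_c$ is exactly $\mathbb{E}[\hat{\pi}^{(c)}_{\text{teacher}}\mid \hat{y}]$.

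\textbf{Step 2 (Hoeffding).} Each summand $Z_i/(N+\alpha)$ is independent and lies in an interval of width $1/(N+\alpha)$. The sum of squared ranges is therefore $N/(N+\alpha)^2$. The two-sided Hoeffding inequality~\citep{hoeffding1963probability} then gives
\begin{equation*}
2\exp\!\bigl(-2\varepsilon^2 / (N/(N+\alpha)^2)\bigr),
\end{equation*}
which is precisely \eqnref{eq:concentration_bound}.

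\textbf{Step 3 (gradient variance).} Since $g_c$ is fixed given $\hat{y}$, the (total, i.e.\ trace-of-covariance) variance of the random vector $\hat{\pi}^{(c)}_{\text{teacher}} g_c$ factors as $\mathrm{Var}(\hat{\pi}^{(c)}_{\text{teacher}}\mid\hat{y})\,\|g_c\|^2$. Theorem~\ref{thm:gradient_stability}(c) gives $\mathrm{Var}(\hat{\pi}^{(c)}_{\text{teacher}}) = N\sigma_\phi^2/(N+\alpha)^2$. Combining this with $\sigma_\phi^2 \le 1/4$ from Popoviciu's inequality yields \eqnref{eq:concentration_variance}. For the rate, $N/(N+\alpha)^2 \le N/N^2 = 1/N$ for $\alpha \ge 0$, so the bound is $\mathcal{O}(1/N)$.

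\textbf{Main obstacle.} The step needing the most care is Step 3's interpretation of "variance" for a vector-valued quantity. I would state explicitly that it means $\mathbb{E}\|X - \mathbb{E}X\|^2$. I would also stress that the factorization relies on $\bar{\pi}^{(c)}_\theta$ and $g_c$ being $\hat{y}$-measurable. Although both depend on $\theta$, neither depends on the teacher rollouts, so conditioning removes the coupling. Everything else is a routine substitution.
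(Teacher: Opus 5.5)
Your proposal is correct and follows the paper's proof essentially step for step. It uses the same centering $\hat{\pi}^{(c)}_{\text{teacher}} - \tilde{\mu}_c = \tfrac{N}{N+\alpha}(\hat{\pi}^{(c)}_{\text{freq}} - \mu_c)$; it uses Hoeffding for the tail, applied directly to the rescaled summands with ranges $1/(N+\alpha)$ rather than to the sample mean with the substitution $\delta = \tfrac{N+\alpha}{N}\varepsilon$, which is equivalent; and it bounds the conditional gradient variance by factorizing it and invoking Theorem~\ref{thm:gradient_stability}(c) with Popoviciu's bound, with your reading of the vector variance as $\mathbb{E}\|X - \mathbb{E}X\|^2$ and the check $N/(N+\alpha)^2 \le 1/N$ being small clarifications the paper leaves implicit.
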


\begin{proof}
Conditional on the student trajectory $\hat{y}$, decompose 
$\hat{\pi}^{(c)}_{\text{teacher}} - \tilde{\mu}_c = \tfrac{N}{N+\alpha}\bigl(\hat{\pi}^{(c)}_{\text{freq}} - \mu_c\bigr)$. 
Applying Hoeffding's inequality~\citep{hoeffding1963probability} to $\hat{\pi}^{(c)}_{\text{freq}}$ as an average of $N$ i.i.d.\ samples in $[0,1]$,
\begin{equation*}
    \Pr\!\left(\big|\hat{\pi}^{(c)}_{\text{freq}} - \mu_c\big| \geq \delta\right) \leq 2\exp(-2N\delta^2),
\end{equation*}
and substituting $\delta = \tfrac{N+\alpha}{N}\varepsilon$ yields \eqnref{eq:concentration_bound}.
For the variance bound, the score sum $\sum_{t \in c} \nabla_\theta \log \pi_\theta(y_t \mid x, y_{<t})$ is deterministic conditional on $\hat{y}$, so the gradient variance factorizes as $\mathrm{Var}\bigl(\hat{\pi}^{(c)}_{\text{teacher}}\bigr) \cdot \bigl\|\sum_{t \in c} \nabla_\theta \log \pi_\theta(y_t \mid x, y_{<t})\bigr\|^2$. 
Substituting $\mathrm{Var}\bigl(\hat{\pi}^{(c)}_{\text{teacher}}\bigr) \leq N / 4(N+\alpha)^2$ from \theoremref{thm:gradient_stability}(c) gives \eqnref{eq:concentration_variance}.
\end{proof}

\subsection{Un-audited Token Stability of \ours}
\label{subsec:un_audited_stability}
\ours audits only $M \cdot C$ tokens per trajectory and leaves the remaining $T - M \cdot C$ tokens un-audited. 
This sparsity is the design feature that makes \ours efficient---one teacher query per chunk rather than per token. 
However, without any constraint on the un-audited region, the student can exploit the sparse audited supervision in ways that degrade the trajectory globally, including collapsing entropy on un-audited tokens to shorten trajectories and reduce penalty mass, generating incoherent connective text between audited chunks, or shifting probability mass on un-audited tokens to satisfy gradients from the audited chunks at the cost of long-horizon fluency.

The KL anchor against the frozen base policy $\pi_{\text{ref}}$ closes these failure modes by constraining the un-audited region directly. 
At each un-audited token, Pinsker's inequality~\citep{pinsker1964information} bounds the total-variation distance between $\pi_{\text{ref}}$ and $\pi_\theta$ by the per-token KL, so reducing per-token KL through the anchor term directly bounds the per-token drift. 
At the aggregate level, $\theta_{\text{ref}}$ is always a feasible point of the objective at which the anchor's per-token KL terms vanish, so the optimality of $\theta^*$ forces an aggregate constraint. 
The total KL summed over un-audited tokens at $\theta^*$ cannot exceed the chunk-level loss improvement $\mathcal{L}_{\text{chunk}}(\theta_{\text{ref}}) - \mathcal{L}_{\text{chunk}}(\theta^*)$ divided by $\beta$.
Beyond a regularization weight, $\beta$ therefore has a concrete operational meaning, i.e., $\beta^{-1}$ is the exchange rate at which \ours converts chunk-level loss improvement into permitted un-audited drift.
\theoremref{theorem:trust_region} below makes both bounds precise---the pointwise total-variation control from Pinsker's inequality, and the aggregate KL budget that scales inversely with $\beta$.

\begin{theorem}[Trust-region bound on un-audited tokens]
\label{theorem:trust_region}
At any iterate $\theta$ and for every un-audited position $t \in \mathcal{U}$,
\begin{equation}
    D_{\mathrm{TV}}\!\bigl(\pi_{\text{ref}}(\cdot \mid x, y_{<t}) \,\|\, \pi_\theta(\cdot \mid x, y_{<t})\bigr) \leq \sqrt{\tfrac{1}{2}D_{\mathrm{KL}}\!\bigl(\pi_{\text{ref}}(\cdot \mid x, y_{<t}) \,\|\, \pi_\theta(\cdot \mid x, y_{<t})\bigr)}.
    \label{eq:tv_pinsker}
\end{equation}
Furthermore, any minimizer $\theta^* = \arg\min_\theta \mathcal{L}_{\ours}(\theta)$ of \eqnref{eq:omniopd_loss} satisfies the aggregate budget
\begin{equation}
    \sum_{t \in \mathcal{U}} D_{\mathrm{KL}}\!\bigl(\pi_{\text{ref}}(\cdot \mid x, y_{<t}) \,\|\, \pi_{\theta^*}(\cdot \mid x, y_{<t})\bigr) \;\leq\; \frac{1}{\beta}\Bigl[\mathcal{L}_{\text{chunk}}(\theta_{\text{ref}}) - \mathcal{L}_{\text{chunk}}(\theta^*)\Bigr].
    \label{eq:kl_budget}
\end{equation}
As $\beta \to \infty$, the aggregate un-audited drift vanishes and $\pi_{\theta^*} \to \pi_{\text{ref}}$ on $\mathcal{U}$.
\end{theorem}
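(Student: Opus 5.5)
The proof splits into three parts: the pointwise bound (eq:tv_pinsker), the aggregate budget (eq:kl_budget), and the $\beta \to \infty$ limit.

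\emph{Pointwise bound.} Equation~\eqref{eq:tv_pinsker} is Pinsker's inequality applied at each un-audited position $t \in \mathcal{U}$. Take $P = \pi_{\text{ref}}(\cdot \mid x, y_{<t})$ and $Q = \pi_\theta(\cdot \mid x, y_{<t})$, two distributions on the finite vocabulary $\mathcal{V}$. Pinsker gives $D_{\mathrm{TV}}(P\|Q) \le \sqrt{\tfrac12 D_{\mathrm{KL}}(P\|Q)}$. This holds for any pair of distributions and any iterate $\theta$, so nothing specific to the objective is needed. If $Q$ gives zero mass where $P$ is positive, the KL is $+\infty$ and the bound holds trivially; under softmax outputs this case does not arise.

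\emph{Aggregate budget.} I will use a feasibility comparison. Since $\theta^*$ minimizes $\mathcal{L}_{\ours}$, we have $\mathcal{L}_{\ours}(\theta^*) \le \mathcal{L}_{\ours}(\theta_{\text{ref}})$. At $\theta_{\text{ref}}$ every anchor term is $D_{\mathrm{KL}}(\pi_{\text{ref}}\|\pi_{\text{ref}}) = 0$, so $\mathcal{L}_{\ours}(\theta_{\text{ref}}) = \mathcal{L}_{\text{chunk}}(\theta_{\text{ref}})$. Expanding the left side gives
\[
\mathcal{L}_{\text{chunk}}(\theta^*) + \beta \sum_{t \in \mathcal{U}} D_{\mathrm{KL}}\bigl(\pi_{\text{ref}}(\cdot \mid x, y_{<t}) \,\|\, \pi_{\theta^*}(\cdot \mid x, y_{<t})\bigr) \le \mathcal{L}_{\text{chunk}}(\theta_{\text{ref}}).
\]
Rearranging and dividing by $\beta > 0$ yields \eqref{eq:kl_budget}. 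Since KL is nonnegative, this also shows $\mathcal{L}_{\text{chunk}}(\theta^*) \le \mathcal{L}_{\text{chunk}}(\theta_{\text{ref}})$, so the right-hand side is a nonnegative budget.

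\emph{Main obstacle.} The comparison above silently assumes that the trajectory $\hat y$, the audited chunks, the set $\mathcal{U}$, and the estimator values $\hat{\pi}^{(c)}_{\text{teacher}}$ are the same at $\theta^*$ and at $\theta_{\text{ref}}$. In \eqnref{eq:omniopd_loss} the expectation is over $\hat y \sim \pi_\theta$, and $\hat{\pi}^{(c)}_{\text{teacher}}$ depends on $\theta$ through $\bar{\pi}^{(c)}_\theta$. I would handle this as the paper's Setup does: condition on the sampled trajectory and treat the multipliers as stop-gradient constants. Then both terms are evaluated on a common fixed $\hat y$ and $\mathcal{U}$, and $\theta^*$ is read as the minimizer of this surrogate. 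I would state this reading explicitly. Otherwise the anchor sum on the left and the chunk losses on the right involve different index sets, and the inequality is no longer a pure algebraic rearrangement.

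\emph{Limit $\beta \to \infty$.} Fix the conditioning, so that $\mathcal{L}_{\text{chunk}}(\theta_{\text{ref}})$ is a finite constant independent of $\beta$. I then need a uniform lower bound on $\mathcal{L}_{\text{chunk}}(\theta^*)$. Each chunk term is $-\hat{\pi}^{(c)}_{\text{teacher}}\sum_{t\in c}\log\pi_\theta(y_t\mid\cdot)$, which is at least $0$ because $\hat{\pi}^{(c)}_{\text{teacher}} \in [0,1]$ and log-probabilities are nonpositive. Hence $\mathcal{L}_{\text{chunk}}(\theta^*) \ge 0$.

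It follows that the right-hand side of \eqref{eq:kl_budget} is at most $\mathcal{L}_{\text{chunk}}(\theta_{\text{ref}})/\beta \to 0$. So every per-token KL on $\mathcal{U}$ tends to zero. By \eqref{eq:tv_pinsker}, each per-token total-variation distance also vanishes, which gives $\pi_{\theta^*} \to \pi_{\text{ref}}$ on $\mathcal{U}$.
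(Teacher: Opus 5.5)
Your proposal is correct and follows the paper's proof: Pinsker's inequality at each $t \in \mathcal{U}$, then the feasibility comparison $\mathcal{L}_{\ours}(\theta^*) \le \mathcal{L}_{\ours}(\theta_{\text{ref}}) = \mathcal{L}_{\text{chunk}}(\theta_{\text{ref}})$, rearranged into the KL budget. Your bound $\mathcal{L}_{\text{chunk}}(\theta^*) \ge 0$ (from $\hat{\pi}^{(c)}_{\text{teacher}} \in [0,1]$ and nonpositive log-probabilities) and your explicit conditioning on a fixed trajectory fill in the step the paper only asserts, namely that the right-hand side is bounded by a $\beta$-independent constant.
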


\begin{proof}
The pointwise total-variation bound in \eqnref{eq:tv_pinsker} is Pinsker's inequality applied at each $t \in \mathcal{U}$. 
For the aggregate budget in \eqnref{eq:kl_budget}, evaluate $\mathcal{L}_{\ours}$ at the feasible point $\theta_{\text{ref}}$, where the KL anchor term vanishes by construction:
\begin{equation*}
    \mathcal{L}_{\text{chunk}}(\theta^*) \;+\; \beta \sum_{t \in \mathcal{U}} D_{\mathrm{KL}}\bigl(\pi_{\text{ref}} \,\|\, \pi_{\theta^*}\bigr) \;\leq\; \mathcal{L}_{\ours}(\theta^*) \;\leq\; \mathcal{L}_{\ours}(\theta_{\text{ref}}) \;=\; \mathcal{L}_{\text{chunk}}(\theta_{\text{ref}}),
\end{equation*}
using optimality of $\theta^*$ and the vanishing of the KL term at $\theta_{\text{ref}}$. 
Rearranging yields the stated budget. 
The limiting statement $\pi_{\theta^*} \to \pi_{\text{ref}}$ as $\beta \to \infty$ follows because the right-hand side is bounded above by a constant independent of $\beta$, so the aggregate KL must tend to zero.
\end{proof}

\subsection{Tokenizer- and Style-Invariant Supervision}
\label{subsec:invariant_supervision}
In the logit-free setting that \ours addresses, the natural token-level alternative is sequence-level distillation~\citep{kim2016sequence}---sample $N$ teacher rollouts and apply token-level cross-entropy on each rollout as if it were a ground-truth sequence. 
The resulting supervision forces the student to match the teacher's surface-form choices token by token, conflating semantic content with stylistic and formatting decisions. 
Standard OPD, although not directly comparable in the logit-free setting, exhibits a structurally analogous property at the distributional level---its reverse-KL objective in \eqnref{eq:opd_loss} contains terms $\log \pi^*_{\text{teacher}}(v \mid x, y_{<t})$ summed against every vocabulary token $v$, so the student inherits the teacher's full per-token preferences, including distinctions that are semantically irrelevant.

\ours's chunk-level loss decouples semantic content from surface form by routing all teacher information through the semantic similarity function $\phi$. 
The Bayesian estimator $\hat{\pi}^{(c)}_{\text{teacher}}$ depends on rollouts only through $k^{(c)}_{\text{sem}}$, and the loss in \eqnref{eq:omniopd_loss} depends on rollouts only through this estimator. 
Any invariance property $\phi$ has is therefore inherited by the chunk-level loss. 
\theoremref{thm:invariant_supervision} below formalizes this inheritance and contrasts it with the two token-level alternatives above.

\begin{theorem}[Tokenizer and stylistic invariance of the chunk-level loss]
\label{thm:invariant_supervision}
Let $\sim_\phi$ denote the equivalence relation on token sequences induced by $\phi$: $y \sim_\phi y'$ iff $\phi(z, y) = \phi(z, y')$ for every $z$. 
For any two sets of teacher rollouts $\{y^{(i)}_{\text{teacher}}\}_{i=1}^N$ and $\{y'^{(i)}_{\text{teacher}}\}_{i=1}^N$ that are $\sim_\phi$-equivalent (i.e., $y^{(i)}_{\text{teacher}} \sim_\phi y'^{(i)}_{\text{teacher}}$ for every $i$), the chunk-level Bayesian estimator and the chunk-level loss in \eqnref{eq:omniopd_loss} are invariant:
\begin{equation}
    \hat{\pi}^{(c)}_{\text{teacher}}\bigl(\{y^{(i)}_{\text{teacher}}\}\bigr) = \hat{\pi}^{(c)}_{\text{teacher}}\bigl(\{y'^{(i)}_{\text{teacher}}\}\bigr), \qquad \mathcal{L}_{\text{chunk}}\bigl(\theta; \{y^{(i)}_{\text{teacher}}\}\bigr) = \mathcal{L}_{\text{chunk}}\bigl(\theta; \{y'^{(i)}_{\text{teacher}}\}\bigr).
    \label{eq:invariant_supervision}
\end{equation}
In contrast, the sequence-level distillation loss
\begin{equation}
    \mathcal{L}_{\text{SFT}}\bigl(\theta; \{y^{(i)}_{\text{teacher}}\}\bigr) := -\frac{1}{N} \sum_{i=1}^N \sum_{t} \log \pi_\theta\bigl(y^{(i)}_{\text{teacher}, t} \mid x, y^{(i)}_{\text{teacher}, <t}\bigr)
    \label{eq:sft_rollouts_loss}
\end{equation}
depends on specific teacher tokens and is not invariant under $\sim_\phi$-substitution. 
The standard OPD loss in \eqnref{eq:opd_loss} depends on the teacher's per-token distribution $\pi^*_{\text{teacher}}$ and distinguishes $\sim_\phi$-equivalent tokens whenever the teacher assigns them different probabilities.
\end{theorem}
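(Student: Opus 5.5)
The invariance claim follows by tracing how teacher rollouts enter the estimator and the loss; the contrast claims each need a small explicit example. We work conditionally on the student trajectory $\hat{y}$ and on the selected chunks, as in the Setup, so the student chunk $y_c$, the prior $\bar{\pi}^{(c)}_\theta$, $N$ and $\alpha$ are fixed. Only the rollout sets vary.

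\emph{Invariance of the estimator and the loss.} For each chunk $c$ and each $i$, apply the definition of $\sim_\phi$ with $z = y_c$. Since $y^{(i)}_{\text{teacher}} \sim_\phi y'^{(i)}_{\text{teacher}}$, this gives $\phi(y_c, y^{(i)}_{\text{teacher}}) = \phi(y_c, y'^{(i)}_{\text{teacher}})$. Summing over $i$ as in \eqnref{eq:aggregate_semantic_similarity}, the two rollout sets produce the same $k^{(c)}_{\text{sem}}$. By \eqnref{eq:bayesian_smooth_estimation}, the estimator $\hat{\pi}^{(c)}_{\text{teacher}}$ is a function of $k^{(c)}_{\text{sem}}$, $\bar{\pi}^{(c)}_\theta$, $N$ and $\alpha$ only, so it coincides for the two sets. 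In $\mathcal{L}_{\text{chunk}}$ from \eqnref{eq:omniopd_loss}, the rollouts appear only through these per-chunk multipliers; the log-likelihood factors $\log \pi_\theta(y_t \mid x, y_{<t})$ depend only on the student trajectory. Hence $\mathcal{L}_{\text{chunk}}$ agrees term by term for the two sets, which proves \eqnref{eq:invariant_supervision}. One caveat: if rollouts are drawn separately for each chunk, the equivalence hypothesis should be read chunk-wise. The argument is then unchanged, because it holds independently for each $c$.

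\emph{Non-invariance of $\mathcal{L}_{\text{SFT}}$.} A counterexample suffices. Take a $\phi$ with a nontrivial equivalence class; ROUGE-1 is the natural choice. Two sequences with the same unigram multiset but different token order (or different tokenization of the same words) are $\sim_\phi$-equivalent. Pick such a pair $y \neq y'$. Since $\pi_\theta$ is a softmax policy, we can choose $\theta$ so that the sequence log-likelihoods $\sum_t \log \pi_\theta(y_t \mid \cdot)$ and $\sum_t \log \pi_\theta(y'_t \mid \cdot)$ differ. Using $N = 1$ with rollout $y$ versus $y'$ in \eqnref{eq:sft_rollouts_loss} then gives different losses.

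\emph{Standard OPD.} The reverse-KL term in \eqnref{eq:opd_loss} equals $\sum_v \pi_\theta(v)\log\pi_\theta(v) - \sum_v \pi_\theta(v)\log \pi^*_{\text{teacher}}(v)$. This is affected by how the teacher distributes mass among individual tokens. Take two tokens $v_1 \sim_\phi v_2$ and a teacher that assigns them different probabilities. Swapping those two teacher probabilities changes the loss whenever $\pi_\theta(v_1) \neq \pi_\theta(v_2)$, because the swap changes the cross term by $(\pi_\theta(v_1) - \pi_\theta(v_2))\bigl(\log \pi^*(v_2) - \log \pi^*(v_1)\bigr) \neq 0$. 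The equivalence classes are unchanged by the swap, so the OPD loss distinguishes $\sim_\phi$-equivalent tokens.

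\emph{Main obstacle.} The hardest step is making the contrast claims rigorous rather than rhetorical. "Not invariant" has to be an existence statement: there exist some $\phi$ with a nontrivial equivalence class, some $\theta$, and some teacher for which the loss changes. I would state it in that form and rely on the ROUGE-1 and token-swap examples above. The invariance direction itself is immediate.
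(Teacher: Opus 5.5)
Your proposal is correct and follows the paper's proof. Invariance comes from tracing the rollouts through $k^{(c)}_{\text{sem}}$ (taking $z = y_c$ in the definition of $\sim_\phi$) into the per-chunk multiplier $\hat{\pi}^{(c)}_{\text{teacher}}$, and the contrasts come from the SFT loss's explicit dependence on specific teacher tokens and the OPD loss's dependence on $\log \pi^*_{\text{teacher}}$. Your additions make the paper's informal ``generally differ'' and ``sensitive to any per-token distinction'' into genuine existence statements: the chunk-wise reading of the hypothesis, the ROUGE-1 reordering counterexample with $N=1$, and the swap computation, which correctly adds the side condition $\pi_\theta(v_1)\neq\pi_\theta(v_2)$.
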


\begin{proof}
The chunk-level loss in \eqnref{eq:omniopd_loss} depends on the teacher rollouts $\{y^{(i)}_{\text{teacher}}\}$ only through the Bayesian estimator $\hat{\pi}^{(c)}_{\text{teacher}}$ from \eqnref{eq:bayesian_smooth_estimation}, which in turn depends on the rollouts only through $k^{(c)}_{\text{sem}} = \sum_i \phi(y_c, y^{(i)}_{\text{teacher}, c})$. 
By the definition of $\sim_\phi$, $\phi(y_c, y^{(i)}_{\text{teacher}, c}) = \phi(y_c, y'^{(i)}_{\text{teacher}, c})$ for every $i$, so $k^{(c)}_{\text{sem}}$ is invariant, $\hat{\pi}^{(c)}_{\text{teacher}}$ is invariant, and $\mathcal{L}_{\text{chunk}}$ is invariant. 
The non-invariance of the sequence-level distillation loss in \eqnref{eq:sft_rollouts_loss} is immediate from its definition---it contains $\log \pi_\theta(y^{(i)}_{\text{teacher}, t} \mid \cdot)$ evaluated at the specific teacher tokens, which generally differ between $\sim_\phi$-equivalent rollouts. 
For standard OPD, \eqnref{eq:opd_loss} sums $\pi_\theta(v \mid \cdot) \log \pi^*_{\text{teacher}}(v \mid \cdot)$ over the entire vocabulary, so the loss is sensitive to any per-token distinction in $\pi^*_{\text{teacher}}$, including those between $\sim_\phi$-equivalent tokens.
\end{proof}

\begin{remark}[Three invariance classes]
\label{rem:invariance_classes}
The choice of $\phi$ determines which surface-level variations the chunk-level loss treats as equivalent, yielding three practically important invariance properties.

\textbf{Tokenizer invariance.} 
If $\phi$ operates on detokenized text---for instance, an embedding-based semantic similarity over decoded strings---then the chunk loss is invariant to the teacher's tokenizer. 
Teachers with different vocabularies and tokenization schemes yield the same supervision signal up to $\phi$.

\textbf{Paraphrase invariance.} 
If $\phi$ captures chunk-level semantic equivalence, teacher rollouts that paraphrase the same content yield identical chunk loss. 
The student is not penalized for choosing a different surface realization of the same semantic content.

\textbf{Stylistic invariance.} 
If $\phi$ is insensitive to register, formatting, or hedging conventions, teacher rollouts that differ only in stylistic dimensions yield identical chunk loss.

The empirical generalization of \ours across the Qwen3-32B~\citep{yang2025qwen3} and Gemini-2.5-flash~\citep{comanici2025gemini} teachers in later \secref{sec:result_analysis}---two model families with different tokenizers, training data, and stylistic preferences---is consistent with these invariance properties.
\end{remark}
\section{Experiments}
\label{sec:experiments}

\subsection{Experimental Setup}
\label{subsec:setup}

\paragraph{Models.}
We use Qwen3-1.7B and Qwen3-4B~\citep{yang2025qwen3} as the student models. 
To demonstrate the wide adaptivity of \ours, we let teacher models span both open-weight and proprietary frontier models, including Qwen3-32B, Qwen3-30B-A3B-Instruct~\citep{yang2025qwen3} on the open-weight side; and Gemini-2.5-Flash~\citep{huang2025gemini}, Claude-4.5-Haiku~\citep{Claude-4} on the proprietary side.

\paragraph{Datasets and evaluation metrics.} 
For mathematical reasoning, we train on DAPO-Math-17K~\citep{yu2025dapo} and evaluate on AIME-2024~\citep{li2024numinamath}, AIME-2025~\citep{dekoninck2026matharena}, AMC23~\citep{li2024numinamath}, MATH-500~\citep{hendrycks2021measuring}, and OlympiadBench~\citep{he2024olympiadbench}. 
For competitive programming, following~\citet{liang2025beyond}, we train on the PRIME-RL compilation~\citep{cui2025process}, which includes APPS~\citep{hendrycksapps2021}, CodeContests~\citep{codecontests}, TACO~\citep{li2023taco}, and Codeforces~\citep{penedo2025codeforces}, and evaluate on 100 randomly sampled validation instances per dataset category.
To ensure statistical robustness, we report Pass@1 averaged over 16 independent sampling trials.

\paragraph{Baselines.} 
We compare against three baselines, one for each teacher-access setting.
GRPO~\citep{shao2024deepseekmath} serves as the self-exploratory RL baseline that requires no teacher. 
Offline SFT on teacher rollouts~\citep{kim2016sequence} is the standard logit-free distillation baseline, which we apply across all teachers. 
For open-weight teachers, where logits are accessible, we additionally compare against the standard white-box OPD approach~\citep{lu2025onpolicydistillation}.

\paragraph{Implementation Details.}
We train all student models for 1 epoch with learning rate $1 \times 10^{-6}$ and a per-device batch size of 2. 
To accommodate extended reasoning trajectories, the maximum sequence length is 20K tokens during training and extended to 32K tokens during evaluation.
Student sampling temperature is 1.0. 
For \ours, the default configuration is $M=10$ chunks, $C=50$ tokens per chunk, $N=10$ Monte Carlo rollouts per chunk, KL anchor weight $\beta=0.1$, and Edit Distance~\citep{navarro2001guided} as the default semantic similarity metric $\phi$.
For the white-box OPD baseline, we use a balanced objective of 0.5 Forward KL + 0.5 Reverse KL. 
To ensure fair comparison under the same budget, we limit the training process to 50 optimization steps for both the SFT baselines and \ours when distilling from Claude-4.5-Haiku and Gemini-2.5-Flash.

\subsection{Main Results}
\label{sec:result_analysis}
We present the main results for mathematical reasoning and competitive programming in \tabref{tab:main_results_math} and~\ref{tab:main_results_coding}.
\begin{table*}[ht]
\centering
\caption{
    % Main results on mathematical reasoning benchmarks. 
    % %
    % Standard OPD requires white-box access and is therefore inapplicable to black-box models (Claude-4.5-Haiku, Gemini-2.5-Flash), highlighting the utility of \ours. 
    % %
    % Best results among distillation methods for each setup are in \textbf{bold}.
    Main results on mathematical reasoning.
    Pass@1 (\%) performance across AIME-2024, AIME-2025, AMC23, MATH-500, and OlympiadBench.
    Each student (Qwen3-1.7B, Qwen3-4B) is distilled from four teachers spanning open-weight (Qwen3-32B, Qwen3-30B-A3B-Instruct) and proprietary frontier (Claude-4.5-Haiku, Gemini-2.5-Flash) models.
    For each student--teacher pair, we report \textit{base inference} (no training), \textit{GRPO} (self-exploratory RL with rule-based rewards, no teacher), \textit{SFT} (offline imitation of teacher rollouts), and \textit{standard OPD} (requires teacher logits; only applicable to open-weight teachers) as comparisons.
    Best results per student configuration are in \textbf{bold}.
}
\vspace{-0.1in}
\label{tab:main_results_math}
\setlength{\tabcolsep}{4pt}
\resizebox{\textwidth}{!}{%
\begin{tabular}{llccccccc}
\toprule
\textbf{Student Model} & \textbf{Teacher Model} & \textbf{Method} & \textbf{MATH-500} & \textbf{AIME-24} & \textbf{AIME-25} & \textbf{AMC-23} & \textbf{OlympiadBench} & \textbf{Average} \\
\midrule
\multicolumn{9}{c}{\textit{Reference Models (Zero-Shot / Base Inference)}} \\
\midrule
- & Qwen3-32B & Base & 73.60 & 75.42 & 68.33 & 87.81 & 52.93 & 71.62 \\
- & Qwen3-30B-A3B-Inst. & Base & 69.81 & 61.04 & 44.79 & 78.59 & 56.38 & 62.12 \\
- & Claude-4.5-haiku & Base & 89.95 & 55.20 & 34.79 & 87.50 & 62.69 & 66.03 \\
- & Gemini-2.5-flash & Base & 86.83 & 74.79 & 65.20 & 93.13 & 61.86 & 76.36 \\
\midrule
\multicolumn{9}{c}{\textit{Student: Qwen3-1.7B}} \\
\midrule
Qwen3-1.7B & - & Base & 60.26 & 48.54 & 37.50 & 75.16 & 37.90 & 51.87 \\
Qwen3-1.7B & - & GRPO & 80.35 & 47.50 & 44.45 & 84.06 & 55.67 & 62.41 \\
\cmidrule{1-9}
\multirow{3}{*}{Qwen3-1.7B} & \multirow{3}{*}{Qwen3-32B} & SFT & 79.91 & 35.42 & 30.20 & 71.09 & 52.19 & 53.76 \\
 & & OPD & \textbf{84.82} & \textbf{48.95} & \textbf{37.58} & 79.06 & 58.08 & \textbf{61.70} \\
 & & \ours & 84.61 & 47.08 & 33.33 & \textbf{79.68} & \textbf{58.64} & 60.67 \\
\midrule
\multicolumn{9}{c}{\textit{Student: Qwen3-4B}} \\
\midrule
Qwen3-4B & - & Base & 61.55 & 50.83 & 41.88 & 75.47 & 40.32 & 54.01 \\
Qwen3-4B & - & GRPO & 73.75 & 68.62 & 60.70 & 95.93 & 52.19 & 70.24 \\
\cmidrule{1-9}
\multirow{3}{*}{Qwen3-4B} & \multirow{3}{*}{Qwen3-32B} & SFT & 68.46 & 60.83 & 58.95 & 83.59 & 47.19 & 63.80 \\
 & & OPD & 67.50 & 61.45 & 60.25 & 82.65 & 48.96 & 64.16 \\
 & & \ours & \textbf{72.60} & \textbf{68.67} & \textbf{60.42} & \textbf{93.44} & \textbf{50.27} & \textbf{69.08} \\
\cmidrule{1-9}
\multirow{3}{*}{Qwen3-4B} & \multirow{3}{*}{Qwen3-30B-A3B-Inst.} & SFT & 66.77 & 34.58 & 31.04 & 70.16 & 46.29 & 49.77 \\
 & & OPD & 69.78 & 48.54 & 36.87 & 75.78 & 50.14 & 56.22 \\
 & & \ours & \textbf{74.98} & \textbf{75.00} & \textbf{62.92} & \textbf{92.65} & \textbf{56.05} & \textbf{72.32} \\
\cmidrule{1-9}
\multirow{2}{*}{Qwen3-4B} & \multirow{2}{*}{Claude-4.5-haiku} & SFT & 80.56 & 60.83 & 53.33 & 85.16 & 57.71 & 67.52 \\
 & & \ours & \textbf{81.28} & \textbf{72.08} & \textbf{65.63} & \textbf{91.25} & \textbf{64.35} & \textbf{74.92} \\
\cmidrule{1-9}
\multirow{2}{*}{Qwen3-4B} & \multirow{2}{*}{Gemini-2.5-flash} & SFT & 80.10 & 71.25 & 63.13 & 90.44 & 62.61 & 73.51 \\
 & & \ours & \textbf{83.30} & \textbf{73.13} & \textbf{64.37} & \textbf{92.96} & \textbf{64.58} & \textbf{75.67} \\
\bottomrule
\end{tabular}%
}
\end{table*}
\begin{table*}[t]
\centering
\caption{
    Main results on competitive programming.
    Pass@1 (\%) on 100 randomly sampled validation instances per dataset across APPS, CodeContests, TACO, and Codeforces.
    Only the Qwen3-32B teacher is used; students, baselines, and formatting conventions follow \tabref{tab:main_results_math}.
}
\vspace{-0.1in}
\label{tab:main_results_coding}
\setlength{\tabcolsep}{12pt}
\resizebox{\textwidth}{!}{%
\begin{tabular}{llcccccc}
\toprule
\textbf{Student Model} & \textbf{Teacher Model} & \textbf{Method} & \textbf{APPS} & \textbf{CodeContests} & \textbf{Codeforces} & \textbf{TACO} & \textbf{Average} \\
\midrule
\multirow{3}{*}{Qwen3-1.7B} & \multirow{3}{*}{Qwen3-32B} & SFT & 46.78 & 45.16 & 43.39 & 26.43 & 40.44 \\
 & & OPD & 52.67 & 52.09 & \textbf{52.89} & 30.61 & 47.06 \\
 & & \ours & \textbf{54.28} & \textbf{52.94} & 51.55 & \textbf{32.96} & \textbf{47.93} \\
\midrule
\multirow{3}{*}{Qwen3-4B} & \multirow{3}{*}{Qwen3-32B} & SFT & 69.82 & 65.59 & 68.60 & 44.12 & 62.03 \\
 & & OPD & \textbf{72.72} & \textbf{68.86} & \textbf{74.33} & \textbf{45.10} & \textbf{65.26} \\
 & & \ours & 71.72 & 68.52 & 70.49 & 44.37 & 63.78 \\
\bottomrule
\end{tabular}%
}
\end{table*}

\paragraph{Improvements over offline SFT.} 
In both domains, \ours achieves substantial performance gains over the standard SFT across all teacher choices. 
For instance, applying \ours to the Qwen3-4B student yields an average mathematical accuracy of 69.08\% using the open-weight Qwen3-32B as the teacher. 
This represents a +15.07\% absolute improvement over base model inference (54.01\%) and a +5.28\% gain over the offline SFT baseline (63.80\%). 
Crucially, this improvement scales when integrating frontier black-box models. 
When distilling from Claude-4.5-Haiku, \ours achieves 74.92\%, decisively outperforming its corresponding SFT baseline (67.52\%) by +7.40\%. 
Similarly, using Gemini-2.5-Flash as the teacher yields 75.67\%, directly surpassing the Gemini-derived SFT baseline (73.51\%) by +2.16\%. 
These results validate that learning from the student's own actively generated reasoning trajectories, and systematically correcting its specific bottlenecks, provides a fundamentally richer learning signal than merely imitating offline demonstrations.

\paragraph{Outperforming standard white-box OPD.} 
Besides improvements over SFT baselines, \ours outperforms standard OPD~\citep{lu2025onpolicydistillation} as well, despite it having full access to the teacher's output logits. 
In the Qwen3-4B (student) and Qwen3-32B (teacher) setup, \ours achieves an average of 69.08\% compared to OPD's 64.16\%. 
The gap widens sharply when the teacher is more stylistically different from the base student. 
With Qwen3-30B-A3B-Instruct as the teacher (a heavily aligned instruct model), \ours achieves 72.32\% while OPD reaches only 56.22\%.
This empirical result aligns with \theoremref{thm:invariant_supervision}, which shows that standard OPD's reverse-KL objective is sensitive to every per-token distinction in $\pi^*_{\text{teacher}}$, so the student inherits all the teacher's surface-form preferences, including stylistic choices that differ from the student's natural phrasing.
\ours's chunk-level loss is invariant to such surface-form variation --- the loss depends on teacher rollouts only through the semantic similarity $\phi$.
With more stylistically different teachers like Qwen3-30B-A3B-Instruct, the stylistic gap from the base student is larger, OPD's per-token penalties become correspondingly more punishing, and \ours's invariance translates into a wider empirical margin.
On competitive programming (\tabref{tab:main_results_coding}), \ours edges OPD on the Qwen3-1.7B (student)/Qwen3-32B (teacher) setup, but slightly underperforms on the Qwen3-4B (student) setup. 
We hypothesize that code's syntactic rigidity makes token-level matching more directly meaningful than in mathematical reasoning prose, so chunk-level invariance helps less here than on math.
%
% We hypothesize this advantage stems from our chunk-level Bayesian formulation acting as a statistical regularizer. Standard OPD enforces strict token-by-token imitation, heavily penalizing the student for valid alternative phrasing. 
%
% By verifying multi-step chunks via semantic similarity, \ours grants the student local structural flexibility. 
%
% It extracts a smoother, more robust gradient signal that focuses on reaching correct reasoning milestones rather than matching deterministic token sequences.

\paragraph{Teacher capacity scaling and comparisons to RL.} 
The impact of teacher capacity on distillation efficacy becomes particularly evident when comparing our method against RL approaches like GRPO~\citep{shao2024deepseekmath}. 
When paired with a relatively bounded open-weight teacher (e.g., Qwen3-32B teaching Qwen3-1.7B), \ours (60.67\%) slightly trails the self-improving GRPO baseline (62.41\%). 
This suggests that when the teacher's capability margin over the student is small, the self-exploration of GRPO under a rule-based environment reward remains highly competitive. 
However, as the teacher setup scales to frontier models, \ours rapidly unlocks performance ceilings that self-rewarding GRPO cannot reach. 
With Claude-4.5-Haiku or Gemini-2.5-Flash teaching the Qwen3-4B student, \ours reaches 74.92\% and 75.67\% respectively, surpassing the 4B GRPO performance (70.24\%). 
%
% When the Qwen3-4B student is trained via \ours using Claude-4.5-Haiku (74.92\%) or Gemini-2.5-Flash (75.67\%), it outperforms the 4B GRPO baseline (70.24\%). 
%
% \zz{
The consistency of these gains across teacher families --- Qwen3, Claude, and Gemini, which differ in tokenizers, training data, and stylistic conventions --- aligns with \theoremref{thm:invariant_supervision}, which establishes that the chunk-level loss is invariant to such surface-form variation in teacher rollouts. 
This confirms that \ours bridges the open/closed-source distillation divide, letting smaller open-weight students extract teacher signal from frontier proprietary models (previously inaccessible to standard OPD) and reach performance beyond what self-exploratory RL alone can achieve.
% }
% This confirms that \ours effectively bridges the capability divide between open and closed-source models, allowing smaller student networks to harness the vast reasoning capacity of frontier models to surpass standard RLHF limits.
%
\section{Analysis and Ablation Studies}
\subsection{Empirical Verification of the Bayesian Estimator}
\label{subsec:bayesian_estimator_analysis}
% \zz{WIP, needs some new structuring}

% \zz{
In this section, we empirically verify the Bayesian estimator's bias--variance characterization from \theoremref{thm:gradient_stability}.
While the theorem characterizes the trade-off symbolically (the Bayesian estimator's variance contracts by $(N/(N+\alpha))^2$ at the cost of an explicit bias term), the net empirical benefit depends on two problem-specific quantities that the theorem alone does not pin down, i.e., the rollout noise $\sigma^2_\phi$ and the prior-target gap $\bar{\pi}^{(c)}_\theta - \mu_c$.
Since this estimator enters the chunk-level loss as a per-chunk gradient multiplier, its empirical MSE directly affects gradient stability during training. 
We verify this benefit empirically under different realistic settings of $\phi$ and $N$ below.
% }
% Before evaluating downstream reasoning performance, we validate the accuracy of our proxy estimation method. 
%

To establish the ground truth, we utilize the open-weight Qwen3-32B model as our verification oracle. 
As we have full access to its internal distributions, we can analytically compute the exact, true trajectory log-probabilities ($\pi_{teacher}^*$) for any given student reasoning chunk. 
We then measure how closely our black-box approximations, the frequentist ($\hat{\pi}_{freq}$) and Bayesian ($\hat{\pi}_{teacher}$) path estimators, align with this true probability density. 
\tabref{tab:prob_estimation} presents the Mean Squared Error (MSE) and Pearson Correlation across various semantic metrics ($\phi$) and Monte Carlo rollout budgets ($N$).
\begin{table}[tb]
\centering
\caption{Comparison of estimation accuracy between Frequentist and Bayesian estimators across different semantic similarity metrics. Evaluated over $1000$ samples with chunk size $C=50$, number of chunks $M = 20$, and prior strength $\alpha=1.0$. The Bayesian estimator consistently reduces MSE and improves correlation with the ground truth.}
\label{tab:prob_estimation}
\vspace{-0.1in}
\setlength{\tabcolsep}{10pt}
\resizebox{\textwidth}{!}{%
\begin{tabular}{lccccc}
\toprule
\textbf{Semantic Metric} & \textbf{Rollouts ($N$)} & \textbf{MSE (Freq) $\downarrow$} & \textbf{MSE (Bayes) $\downarrow$} & \textbf{Corr (Freq) $\uparrow$} & \textbf{Corr (Bayes) $\uparrow$} \\ 
\midrule
\multirow{2}{*}{\texttt{ROUGE-1} \citep{lin2004rouge}} & 10 & $0.0279$ & $\mathbf{0.0217}$ & $0.5530$ & $0.6042$ \\
 & 20 & $0.0274$ & $0.0241$ & $0.5569$ & $0.5838$ \\ 
\midrule
\multirow{2}{*}{\texttt{ROUGE-L} \citep{lin2004rouge}} & 10 & $0.0667$ & $0.0521$ & $0.6396$ & $0.6875$ \\
 & 20 & $0.0663$ & $0.0584$ & $0.6466$ & $0.6726$ \\ 
\midrule
\multirow{2}{*}{\texttt{Jaccard} \citep{niwattanakul2013using}} & 10 & $0.0827$ & $0.0650$ & $0.5865$ & $0.6464$ \\
 & 20 & $0.0826$ & $0.0730$ & $0.5885$ & $0.6210$ \\ 
\midrule
\multirow{2}{*}{\texttt{BLEU-1} \citep{papineni2002bleu}} & 10 & $0.0279$ & $0.0217$ & $0.5539$ & $0.6048$ \\
 & 20 & $0.0273$ & $0.0240$ & $0.5571$ & $0.5842$ \\ 
\midrule
\multirow{2}{*}{\texttt{BLEU-2} \citep{papineni2002bleu}} & 10 & $0.1206$ & $0.0952$ & $0.6402$ & $0.6835$ \\
 & 20 & $0.1202$ & $0.1066$ & $0.6476$ & $0.6707$ \\ 
\midrule
\multirow{2}{*}{\texttt{Edit distance} \citep{navarro2001guided}} & 10 & $0.1857$ & $0.1478$ & $0.6625$ & $\mathbf{0.7215}$ \\
 & 20 & $0.1851$ & $0.1647$ & $0.6713$ & $0.7033$ \\ 
\midrule
\multirow{2}{*}{\texttt{Exact match}} & 10 & $0.3009$ & $0.2417$ & $0.4625$ & $0.6994$ \\
 & 20 & $0.3006$ & $0.2688$ & $0.5051$ & $0.6468$ \\ 
\bottomrule
\end{tabular}%
}
\end{table}
The results illustrate two critical dynamics within the \ours framework. 
First, the Bayesian estimator dominates the frequentist baseline across every evaluated metric and sample size. 
By stabilizing the sparse teacher signals with the student's normalized prior, the Bayesian update systematically reduces the MSE and yields a stronger positive correlation with the true probability density. 
Notably, under \texttt{Exact Match} at $N=10$, correlation improves from $0.46$ (frequentist) to $0.70$ (Bayesian).
This empirically validates \theoremref{thm:gradient_stability}(c), demonstrating that the Bayesian prior effectively dampens estimator variance during black-box optimization and provides a non-vanishing supervision lower bound at chunks where teacher rollouts are sparse (\theoremref{thm:gradient_stability}(b)).
These estimator-level improvements operate within a chunk-level loss formulation that, by design, eliminates the unbounded reverse-KL gradients of standard OPD (\theoremref{thm:gradient_stability}(a)).

Second, the data reveals a clear trade-off between magnitude accuracy and rank correlation. 
Rigid criteria like \texttt{Exact Match} suffer from severe sparsity, producing the highest estimation errors (MSE $>0.24$). 
Strict structural metrics like \texttt{Edit Distance} also underestimate absolute magnitudes (MSE $\approx 0.148$) but achieve the highest Pearson correlation ($0.72$), exceptionally preserving the relative ranking of reasoning paths. 
Conversely, unigram metrics like \texttt{ROUGE-1} bridge the tokenization gap to yield the most accurate absolute estimations, achieving the lowest MSE ($\approx 0.022$). 
We further systematically compare the empirical impact of these two metrics in our sensitivity analysis (\secref{sec:sensitivity_analysis}).

\subsection{Sensitivity Analysis}
\label{sec:sensitivity_analysis}
\begin{table}[t]
\centering
\caption{
    Sensitivity analysis of \ours hyperparameters: verification budget ($M$), Monte Carlo rollouts ($N$), and chunk size ($C$). 
    Experiments utilize the Qwen3-4B student and Qwen3-32B teacher setup. 
    Best results are in \textbf{bold}.
}
\vspace{-0.1in}
\label{tab:sensitivity_hyperparams}
\setlength{\tabcolsep}{18pt}
\resizebox{\textwidth}{!}{%
\begin{tabular}{ccccccccc}
\toprule
$M$ & $N$ & $C$ & \textbf{MATH} & \textbf{AIME-24} & \textbf{AIME-25} & \textbf{AMC-23} & \textbf{Olympiad} & \textbf{Avg} \\
\midrule
10 & 10 & 50 & 72.60 & 68.67 & 60.42 & 93.44 & 50.27 & 69.08 \\
\midrule
5 & 10 & 50 & 72.76 & 69.58 & 55.63 & 93.44 & 50.48 & 68.38 \\
10 & 10 & 25 & 52.73 & 10.42 & 4.79 & 32.19 & 22.26 & 24.48 \\
\midrule
10 & 20 & 50 & 73.35 & 68.54 & 61.04 & 93.13 & 51.81 & 69.57 \\
20 & 10 & 50 & 75.01 & 70.63 & \textbf{62.71} & 90.94 & 55.50 & 70.96 \\
10 & 10 & 100 & \textbf{75.94} & \textbf{72.29} & 62.50 & \textbf{93.59} & 53.58 & \textbf{71.58} \\
\midrule
10 & 20 & 100 & 75.00 & 72.08 & 61.88 & 93.13 & \textbf{55.72} & 71.56 \\
\bottomrule
\end{tabular}%
}
\end{table}
We analyze \ours's sensitivity to its five hyperparameters --- chunk number ($M$), rollout count ($N$), chunk size ($C$), Bayesian prior ($\alpha$), and the semantic metric ($\phi$) --- on the Qwen3-4B (student) and Qwen3-32B (teacher) setup. 
Results are shown in \tabref{tab:sensitivity_hyperparams}--\ref{tab:sensitivity_semantic_metric} and \figref{fig:alpha_sensitivity}. 
% }
% The results, utilizing the Qwen3-4B student and Qwen3-32B teacher, are detailed in \tabref{tab:sensitivity_hyperparams} and \tabref{tab:sensitivity_semantic_metric}.

\paragraph{The criticality of chunk size ($C$).}
The most pronounced sensitivity lies in the chunk size. 
Expanding the chunk size to $C=100$ yields the highest performance (71.58\%), indicating that larger chunks give the Bayesian estimator more structural context per audit.
Conversely, halving the chunk size to $C=25$ causes a dramatic degradation from 69.08\% to 24.48\%. 
This confirms our hypothesis that a 25-token window is too narrow to encapsulate a meaningful logical deduction. 
At such scales, the semantic similarity metric $\phi$ becomes too noisy, penalizing the student for minor phrasing differences rather than verifying actual reasoning progress. 

\paragraph{Scaling Monte Carlo rollouts ($N$).}
Increasing the number of teacher rollouts from $N=10$ to $20$ yields only marginal improvements.
Under a chunk size of $C=50$, $N=20$ provides a slight +0.49\% gain over the $N=10$ baseline. 
Moreover, under the optimal $C=100$ setting, scaling to $N=20$ actually results in a fractionally lower average. 
This diminishing-returns pattern matches \theoremref{thm:concentration}, which establishes that the estimator's accuracy improves at rate $O(1/N)$, so the marginal benefit of doubling $N$ halves with each doubling.

% The expanded sample size does not meaningfully tighten the Bayesian estimation enough to justify doubling the API query volume. 
% %
% This consistently demonstrates that $N=10$ sits at an optimal Pareto frontier, capturing sufficient semantic variance for robust distillation while remaining highly budget-efficient.

\paragraph{Robustness to chunk number ($M$).}
\ours allows for flexible scaling of the total verification budget. 
Increasing the audited chunks to $M=20$ provides a solid performance boost, raising the average to 70.96\% by granting the student denser supervisory signals across the trajectory. 
Conversely, the framework is also highly resilient to aggressive budget reductions.
Halving the number of audited chunks from $M=10$ to $M=5$ results in only a minimal performance dip (69.08\% down to 68.38\%). 
This resilience serves as a strong empirical validation for our peak-entropy anchored scheduling (\secref{subsubsec:speculative_verification}). 
Because the $M=5$ chunks are strictly targeted at the most critical, high-uncertainty decision boundaries, the corrective gradient signal remains highly potent, allowing for massive teacher cost reductions without sacrificing the student's final reasoning capability. 
A detailed per-sample cost analysis demonstrating the Pareto efficiency and economic scalability of these hyperparameter configurations is provided in \appref{app:cost_analysis}.
\begin{table}[t]
\centering
\caption{Sensitivity analysis of the semantic measurement function ($\phi$). We compare normalized Edit Distance against ROUGE-1 unigram overlap across different student-teacher pairings. Best results per setup are in \textbf{bold}.}
\label{tab:sensitivity_semantic_metric}
\setlength{\tabcolsep}{10pt}
\resizebox{\columnwidth}{!}{%
\begin{tabular}{lllcccccc}
\toprule
\textbf{Student Model} & \textbf{Teacher Model} & \textbf{Metric ($\phi$)} & \textbf{MATH} & \textbf{AIME-24} & \textbf{AIME-25} & \textbf{AMC-23} & \textbf{Olympiad} & \textbf{Avg} \\
\midrule
\multirow{2}{*}{Qwen3-1.7B} & \multirow{2}{*}{Qwen3-32B} & Edit Distance & 84.61 & 47.08 & 33.33 & 79.68 & \textbf{58.64} & 60.67 \\
 & & ROUGE-1 & \textbf{86.30} & \textbf{48.13} & \textbf{36.46} & \textbf{85.31} & 58.16 & \textbf{62.87} \\
\midrule
\multirow{2}{*}{Qwen3-4B} & \multirow{2}{*}{Qwen3-32B} & Edit Distance & \textbf{72.60} & 68.67 & 60.42 & \textbf{93.44} & 50.27 & 69.08 \\
 & & ROUGE-1 & 70.36 & \textbf{72.71} & \textbf{61.04} & 92.18 & \textbf{50.52} & \textbf{69.36} \\
\midrule
\multirow{2}{*}{Qwen3-4B} & \multirow{2}{*}{Qwen3-30B-A3B} & Edit Distance & \textbf{74.98} & \textbf{75.00} & \textbf{62.92} & 92.65 & \textbf{56.05} & \textbf{72.32} \\
 & & ROUGE-1 & 74.78 & 71.67 & 61.88 & \textbf{93.75} & 55.72 & 71.56 \\
\bottomrule
\end{tabular}%
}
\end{table}

\begin{wrapfigure}{r}{0.5\textwidth}
\vspace{-0.25in}
    \centering
    \includegraphics[width=\linewidth]{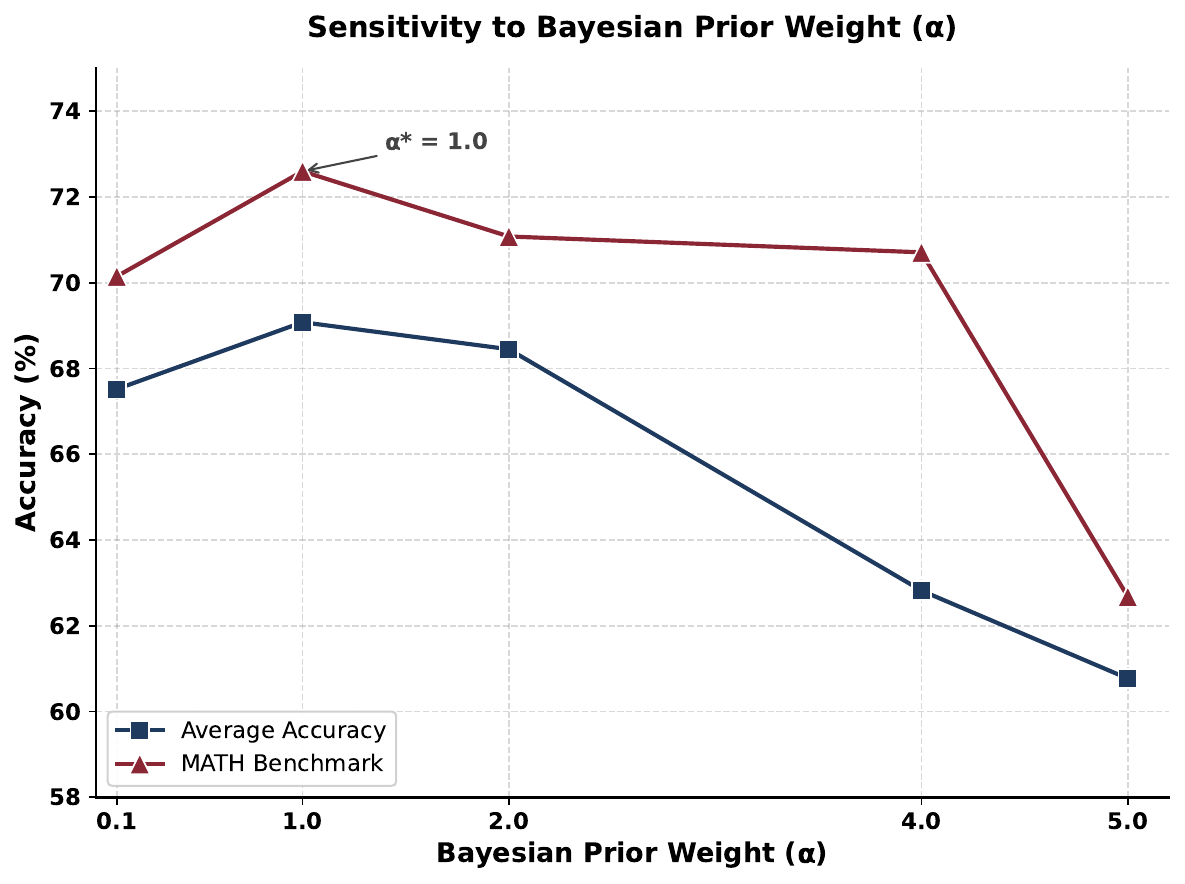}
    \vspace{-0.25in}
    \caption{
        Sensitivity of \ours to the Bayesian prior ($\alpha$). 
        $\alpha=1.0$ achieves the optimal balance between smoothing Monte Carlo variance and allowing policy updates.
    }
    \label{fig:alpha_sensitivity}
    \vspace{-0.15in}
\end{wrapfigure}

\paragraph{Impact of the Bayesian prior ($\alpha$).}
The parameter $\alpha$ controls the strength of the base-model prior in our Bayesian smoothing step. 
As illustrated in \figref{fig:alpha_sensitivity}, \ours exhibits a distinct performance peak at $\alpha=1.0$ (69.08\% average). 
Deviating from this optimal balance in either direction results in performance degradation, illustrating the bias-variance tradeoff inherent in discrete sampling. 
When the prior is too weak ($\alpha=0.1$), the average accuracy drops to 67.52\%; without sufficient smoothing, the framework overfits to the severe statistical variance and zero-probability traps of the $N=10$ discrete Monte Carlo rollouts. 
Conversely, applying an overly aggressive prior ($\alpha \geq 4.0$) causes a steep performance collapse, plummeting to 60.77\% at $\alpha=5.0$. 
In this high-$\alpha$ regime, the trust-region penalty becomes too restrictive, forcing the student to stubbornly anchor to its own sub-optimal base policy and effectively blocking it from learning the teacher's trajectories.

\paragraph{Robustness to semantic measurement functions ($\phi$).}
\ours relies on a continuous measurement function $\phi$ to evaluate the student's chunk-level progress. 
\tabref{tab:sensitivity_semantic_metric} compares our default normalized \texttt{Edit Distance} against standard \texttt{ROUGE-1} unigram overlap across three student--teacher pairings. 
The results demonstrate that \ours is broadly robust to the choice of $\phi$, although the optimal metric depends on the student--teacher capability gap and the teacher's alignment level.
%
% The empirical results demonstrate that \ours is highly adaptable to the chosen string-matching algorithm, though the optimal metric is heavily dependent on the student's capacity and the teacher's alignment level.
When distilling the 32B model into the highly constrained 1.7B student, \texttt{ROUGE-1} provides a significant advantage (+2.20\% average), even better than the 1.7B GRPO baseline in \tabref{tab:main_results_math}. 
Because the smaller student struggles to perfectly mimic the complex structural phrasing of the 32B teacher, \texttt{Edit Distance} is overly punitive. 
\texttt{ROUGE-1} grants the 1.7B model the necessary linguistic flexibility to achieve the correct reasoning milestones. 
As the student capacity increases to 4B, the gap between the metrics narrows to a marginal +0.28\% advantage for \texttt{ROUGE-1}.
Conversely, when distilling from the heavily aligned Qwen3-30B-Instruct model into the 4B student, \texttt{Edit Distance} secures a slight edge (+0.76\% average). 
This variance suggests that while \texttt{Edit Distance} enforces the strict relative ranking constraints required for mimicking highly aligned, structured reasoning, \texttt{ROUGE-1} offers the vocabulary flexibility required for extreme capability jumps between mismatched base models.

\subsection{Ablation Studies}
\label{sec:ablation_studies}

To isolate the empirical contribution of each core mechanism in \ours, we conduct an ablation study using the Qwen3-4B student and Qwen3-32B teacher setup. 
The results, presented in \tabref{tab:ablation}, validate the theoretical properties established in \secref{sec:theoretical_analysis}. 

The most dramatic impact is observed when removing the base-model KL anchor ($\beta = 0$). 
Without this regularization, the student's average performance catastrophically collapses from 69.08\% to 8.28\%. 
This corroborates \theoremref{theorem:trust_region}.
Because \ours audits only $M$ discrete chunks and leaves vast unsupervised gaps in the generated trajectory, the absence of a KL constraint allows the policy to degenerate rapidly, with the student exploiting the sparse verification signals at the cost of foundational linguistic fluency and structural reasoning coherence. 
%
% The student exploits the sparse verification signals, ultimately losing its foundational linguistic fluency and structural reasoning coherence. 

Beyond the necessity of the trust region, we also validate the impact of our statistical and scheduling mechanisms. 
Replacing the Bayesian proxy with raw frequentist estimation (i.e., raw semantic match ratios without the base-model prior) degrades the overall average accuracy from 69.08\% to 68.63\%. 
While the model maintains baseline functionality, the lack of Bayesian smoothing exposes the optimization process to the severe estimation error inherent in limited sampling (\theoremref{thm:gradient_stability}). 
Finally, we substitute our peak-entropy anchored selection with a naive uniform sampling strategy, which further reduces the overall average to 68.45\%. 
By blindly evaluating segments of the reasoning path, the uniform approach wastes our highly constrained verification budget on deterministic, low-complexity transitions. 
Anchoring the verification chunks strictly at points of maximum predictive entropy ensures the black-box teacher's corrective signal is concentrated exactly where the student policy faces critical, uncertain decision boundaries. 
To provide concrete intuition for how this peak-entropy scheduling, we also provide detailed qualitative case studies in \appref{sec:appendix_case_studies}.

\begin{table}[t]
\centering
\caption{Ablation study of \ours components. The setup utilizes the Qwen3-4B student and the Qwen3-32B teacher. Removing the KL anchoring results in catastrophic policy collapse, validating our theoretical bounds.}
\label{tab:ablation}
\vspace{-0.1in}
\setlength{\tabcolsep}{18pt}
\resizebox{\textwidth}{!}{%
\begin{tabular}{lcccccc}
\toprule
\textbf{Variant} & \textbf{MATH} & \textbf{AIME-24} & \textbf{AIME-25} & \textbf{AMC-23} & \textbf{Olympiad} & \textbf{Avg} \\
\midrule
\ours (Full) & 72.60 & 68.67 & \textbf{60.42} & \textbf{93.44} & \textbf{50.27} & \textbf{69.08} \\
\midrule
w/o Entropy Selection & 70.35 & \textbf{70.42} & 58.54 & 92.69 & 50.26 & 68.45 \\
w/o Bayesian Smoothing & \textbf{72.95} & 70.00 & 58.29 & 91.87 & 50.04 & 68.63 \\
w/o KL Anchoring & 9.27 & 8.13 & 6.25 & 12.03 & 5.71 & 8.28 \\
\bottomrule
\end{tabular}%
}
\end{table}
\section{Conclusion}
\label{sec:conclusion}

In this work, we introduced \ours, a novel logit-free, chunk-level OPD framework that addresses both the white-box access requirement of standard OPD and the brittleness of its token-level supervision signal. 
By unifying peak-entropy chunk scheduling, Bayesian-smoothed speculative verification, and base-model KL anchoring, our approach efficiently extracts robust, variance-reduced gradient signals from teacher models while mathematically bounding policy degeneration. 
Extensive evaluations demonstrate that \ours not only matches the performance of standard OPD but also scales seamlessly to proprietary models. 
Ultimately, \ours bridges the closed-source capability divide, empowering smaller open-weight models to harness the reasoning capacity of frontier models and comprehensively surpass the traditional limits of offline fine-tuning and self-exploratory reinforcement learning.

\clearpage
\newpage
\bibliographystyle{assets/plainnat}
\bibliography{custom}

\clearpage
\newpage
\beginappendix
% \appendix
% \section*{Appendix}

\section{Prompt Templates}
\label{sec:appendix_prompts}

To ensure consistency between the learning and testing distributions, we apply the exact same prompt template across both the training phase (DAPO-Math-17K) and all evaluation benchmarks. As shown in \tabref{tab:prompt_templates}, this template includes a specific directive to format the final answer, ensuring reliable automated parsing via regular expressions.

\begin{table}[h!]
\centering
\caption{Prompt template utilized for all mathematical reasoning tasks (both training and evaluation). The \texttt{\{Question\}} placeholder is replaced with the specific problem text during inference.}
\label{tab:prompt_templates}
\renewcommand{\arraystretch}{1.5}
\begin{tabular}{p{0.2\textwidth} p{0.75\textwidth}}
\toprule
\textbf{Application} & \textbf{Prompt Template} \\
\midrule
\textbf{Training \& \newline Evaluation} & 
\texttt{Solve the following math problem step by step. The last line of your response should be of the form Answer: \textbackslash\$Answer (without quotes) where \textbackslash\$Answer is the answer to the problem.\textbackslash n\textbackslash n\{Question\}} \\
\bottomrule
\end{tabular}
\end{table}

\section{Viability and Teacher Inference Analysis}
\label{app:cost_analysis}
% \zz{
The cost of any on-policy distillation method is teacher inference --- specifically, the number of tokens the teacher must process and generate per training sample. 
This cost dominates both self-hosted teacher deployments, where it determines GPU-hours and wall-clock training time, and hosted-teacher deployments.
In both regimes, modern transformer inference exhibits a sharp asymmetry between its two phases. 
Prefill (consuming input tokens) is parallelizable across the sequence and compute-bound, whereas decode (generating output tokens) is sequential and memory-bandwidth-bound.
Across typical hardware configurations and batching regimes, this yields roughly an order-of-magnitude per-token gap, with decode being the substantially more demanding of the two.

Standard offline SFT places its entire supervision cost on the decode side, having the teacher generate the full reasoning trajectory (~8,000 decode tokens per sample in DAPO-Math-17K) with negligible prefill. 
\ours instead exploits the prefill/decode asymmetry by passing the student's existing trajectory as context --- the teacher absorbs most of its work in cheap parallel prefill and emits only short, targeted decode bursts at the audited chunks.

The per-sample token consumption decomposes across the two phases. 
On the decode side, at each of the $M$ selected chunks the teacher produces $N$ independent Monte Carlo rollouts of length $C$, yielding $M \times N \times C$ decode tokens per sample. 
For $M = 10, N = 20, C = 50$, this is exactly $10,000$ decode tokens. 
On the prefill side, at each chunk the teacher is conditioned on the original task prompt plus the student's trajectory prefix up to the forking point. 
Assuming the $M$ anchor chunks are roughly uniformly distributed across a student trajectory of length $L$, the expected prefix length is $L/2$, giving $M \times (\text{Prompt Length} + L/2)$ prefill tokens per sample.
For $M = 10, L \approx 6,340$, and a 200-token prompt, this totals around $33,700$ prefill tokens.

To compare configurations on a common footing, \tabref{tab:cost_analysis} reports each configuration's total teacher inference cost as a multiplier relative to standard SFT, computed under a representative 8.3:1 decode-to-prefill per-token ratio. 
This ratio is consistent both with the prefill/decode compute asymmetry of modern transformer inference and with the average effort of frontier hosted teachers, and the qualitative conclusions are insensitive to its exact value across the 5x–10x range typical of current systems.

Under the default configuration ($M = 10, C = 50$), \ours costs only 1.75× standard SFT while providing dense on-policy correction at every audited chunk — a modest overhead given that on-policy supervision otherwise typically requires full-trajectory teacher re-generation at every training step. 
More strikingly, the sparse configuration ($M = 5$), which retains 99\% of the baseline accuracy (68.38\% vs 69.08\%), comes in at 0.88x SFT, making it strictly more efficient than offline distillation. 
At the other end of the spectrum, the highest-accuracy configuration ($C = 100$) scales to 3.00× SFT. 
This range traces a controllable effort–accuracy frontier, allowing \ours to be tuned to widely varying training budgets without altering its core mechanism.
\begin{table*}[t]
\centering
\caption{
    Per-sample teacher inference cost across \ours configurations. 
    Token counts are exact; the relative effort column normalizes total teacher inference work against standard SFT under a representative 8.3:1 decode-to-prefill per-token effort ratio, consistent with the prefill/decode asymmetry of modern transformer inference. 
    The sparse configuration ($M = 5$) is strictly more efficient than offline SFT while retaining 99\% of the full-configuration accuracy, and the framework scales smoothly to higher-accuracy regimes at bounded multipliers.
    %
    % Comprehensive cost analysis per mathematical reasoning sample using Gemini 2.5 Flash. 
    % %
    % By shifting the generation burden to heavily discounted input tokens, \ours is highly cost-competitive. 
    % %
    % Notably, when using the sparse configuration ($M=5$), \ours is even cheaper than standard SFT, while higher-budget configurations offer bounded performance scaling.
}
\label{tab:cost_analysis}
\setlength{\tabcolsep}{8pt}
\resizebox{\textwidth}{!}{%
\begin{tabular}{lrrrr}
\toprule
\textbf{Method} 
& \textbf{Hyperparameters} 
& \textbf{Teacher Input} 
& \textbf{Teacher Output} 
& \textbf{Relative Effort}
% & \textbf{Input Cost} 
% & \textbf{Output Cost} 
% & \textbf{Total Cost} 
\\ 
\midrule
Standard SFT & --- & $200$ & $8,000$ &1.00x
% & $\$0.0001$ & $\$0.0200$ & $\$0.020$ ($1.00\times$) 
\\
\midrule
\multirow{7}{*}{\ours} 
 & $M=5, N=10, C=50$ & $16,850$ & $5,000$ &0.88x
 % $\$0.0051$ & $\$0.0125$ & $\mathbf{\$0.018}$ & \textbf{(0.88$\mathbf{\times}$)} 
 \\
 & $M=10, N=10, C=25$ & $33,700$ & $5,000$ &1.13x
 % & $\$0.0101$ & $\$0.0125$ & $\$0.023$ ($1.13\times$) 
 \\
 & $M=10, N=10, C=50$ & $33,700$ & $10,000$ &1.75x
 % & $\$0.0101$ & $\$0.0250$ & $\$0.035$ ($1.75\times$) 
 \\
 & $M=10, N=10, C=100$ & $33,700$ & $20,000$ &3.00x
 % & $\$0.0101$ & $\$0.0500$ & $\$0.060$ ($3.00\times$) 
 \\
 & $M=10, N=20, C=50$ & $67,400$ & $20,000$ &3.50x
 % & $\$0.0202$ & $\$0.0500$ & $\$0.070$ ($3.50\times$) 
 \\
 & $M=20, N=10, C=50$ & $67,400$ & $20,000$ &3.50x
 % & $\$0.0202$ & $\$0.0500$ & $\$0.070$ ($3.50\times$) 
 \\
 & $M=10, N=20, C=100$ & $67,400$ & $40,000$ &6.00x
 % & $\$0.0202$ & $\$0.1000$ & $\$0.120$ ($6.00\times$) 
 \\
\bottomrule
\end{tabular}}
\end{table*}

\clearpage
\section{Supplementary Details of \ours}
\label{app:supp_omniopd}

\subsection{Full Algorithm}
\begin{algorithm}[H]
\caption{The \ours Optimization Framework}\label{alg:ours_framework}
\label{alg:OPD}
\textbf{Input:} Base reference policy $\pi_{ref}$, Student policy $\pi_\theta$, Black-box teacher $\pi_{teacher}^*$ \\
\textbf{Hyperparameters:} chunk number $M$, chunk size $C$, Monte Carlo rollouts $N$, prior strength $\alpha$, KL penalty weight $\beta$, learning rate $\eta$ \\
\textbf{Output:} Optimized student policy $\pi_\theta$
\begin{algorithmic}[1]
\WHILE{not converged}
    \STATE Sample instruction $x \sim \mathcal{D}$
    \STATE \textbf{// 1. On-Policy Trajectory Generation}
    \STATE Generate student reasoning trajectory $y = (y_1, \dots, y_T) \sim \pi_\theta(\cdot \mid x)$
    \STATE \textbf{// 2. Peak-Entropy Anchored Scheduling}
    \FOR{$t = 1$ to $T$}
        \STATE Compute vocabulary entropy: $\mathcal{H}_t = - \sum_{v \in V} \pi_\theta(v \mid x, y_{<t}) \log \pi_\theta(v \mid x, y_{<t})$
    \ENDFOR
    \STATE Select $M$ anchor tokens: $A = \mathop{\arg\max}\limits_{t \subset \{1 \dots T\}, |A|=M} (\mathcal{H}_t)$
    \STATE Extract chunk set $\mathcal{C} = \{c_1, \dots, c_M\}$ based on anchors $A$
    \STATE Identify un-audited tokens: $U = \{1 \dots T\} \setminus \bigcup_{c \in \mathcal{C}} c$
    
    \STATE \textbf{// 3. Speculative Verification \& Bayesian Smoothing}
    \FOR{each chunk $c \in \mathcal{C}$}
        \STATE Query black-box teacher $\pi_{teacher}^*$ with prefix $y_{<c}$ for $N$ rollouts $\{y_{teacher}^{(i)}\}_{i=1}^N$
        \STATE Compute semantic match evidence: $k_{sem}^{(c)} = \sum_{i=1}^{N} \phi(y_c, y_{teacher}^{(i)})$
        \STATE Compute normalized base prior: $\bar{\pi}_\theta^{(c)} = \left( \prod_{t \in c} \pi_\theta(y_t \mid x, y_{<t}) \right)^{\frac{1}{C}}$
        \STATE Estimate Bayesian target proxy: $\hat{\pi}_{teacher}^{(c)} = \frac{k_{sem}^{(c)} + \alpha \cdot \bar{\pi}_\theta^{(c)}}{N + \alpha}$
    \ENDFOR
    
    \STATE \textbf{// 4. Trust Region Anchoring \& Optimization}
    \STATE Calculate chunk-level policy loss:
    \[ \mathcal{L}_{chunk} = - \sum_{c \in \mathcal{C}} \left( \hat{\pi}_{teacher}^{(c)} \sum_{t \in c} \log \pi_\theta(y_t \mid x, y_{<t}) \right) \]
    \STATE Calculate un-audited trust region penalty:
    \[ \mathcal{L}_{KL} = \beta \sum_{t \in U} D_{KL} \Big(\pi_{ref}(\cdot \mid x, y_{<t}) \parallel \pi_\theta(\cdot \mid x, y_{<t})\Big) \]
    \STATE Compute total objective: $\mathcal{L}_{\ours}(\theta) = \mathcal{L}_{chunk} + \mathcal{L}_{KL}$
    \STATE Perform gradient descent step: $\theta \leftarrow \theta - \eta \nabla_\theta \mathcal{L}_{\ours}(\theta)$
\ENDWHILE
\RETURN $\pi_\theta$
\end{algorithmic}
\end{algorithm}

\clearpage
\subsection{Training Dynamics}
\label{sec:training_dynamics}

To understand the optimization stability and learning progression of \ours, we visualize the training dynamics of the Qwen3-4B student in \figref{fig:training_dynamics}. 
We track the process under both an open-weight Qwen3-32B and Gemini-2.5-flash. 
The chunk-level on-policy loss exhibits rapid and stable convergence across both setups. 
For the Qwen3-32B teacher, the loss drops sharply from approximately 0.33 to 0.24 within the first 100 steps before stabilizing. 
Remarkably, when utilizing the Gemini-2.5-flash teacher, the student achieves a similar loss plateau in merely 30 steps. 
This steady convergence empirically validates our Bayesian smoothing mechanism, which anchors the target probability to prevent gradient explosions and ensure stable parameter updates across the reasoning trajectory.

As the student rapidly deviates from its sub-optimal base behavior to align with the teacher's superior reasoning structures, we observe a corresponding initial rise in the KL divergence between the active policy and the frozen base model. 
Crucially, rather than growing unboundedly, the KL curves safely plateau. This trajectory perfectly mirrors the theoretical bounds established in Theorem \ref{theorem:trust_region}, confirming that the trust-region penalty successfully constrains the policy shift within the un-audited gaps and avoids catastrophic entropy collapse.

This stable optimization directly translates to continuous progression on downstream evaluations. 
Across the training steps, the pass-rate accuracy on the challenging AIME-2025 dataset climbs substantially alongside the reduction in on-policy loss (note that the performance gap between this validation curve and the final results in \tabref{tab:main_results_math} is due to maximum token truncation during intermediate evaluation). 
Under the Qwen3-32B teacher, AIME accuracy rises from a sub-30\% baseline to peak above 50\% near step 240. Furthermore, the Gemini-2.5-flash teacher drives the student from 23\% to roughly 47\% accuracy within a strictly constrained 50-step budget. This rapid escalation proves that our peak-entropy chunk scheduling is fundamentally sound: by exclusively correcting the reasoning milestones at points of highest predictive uncertainty, the localized gradient signals efficiently generalize to holistic mathematical problem-solving.
\begin{figure*}[t]
    \centering
    % Top row: Qwen3-32B Teacher
    \begin{subfigure}{0.32\textwidth}
        \includegraphics[width=\linewidth]{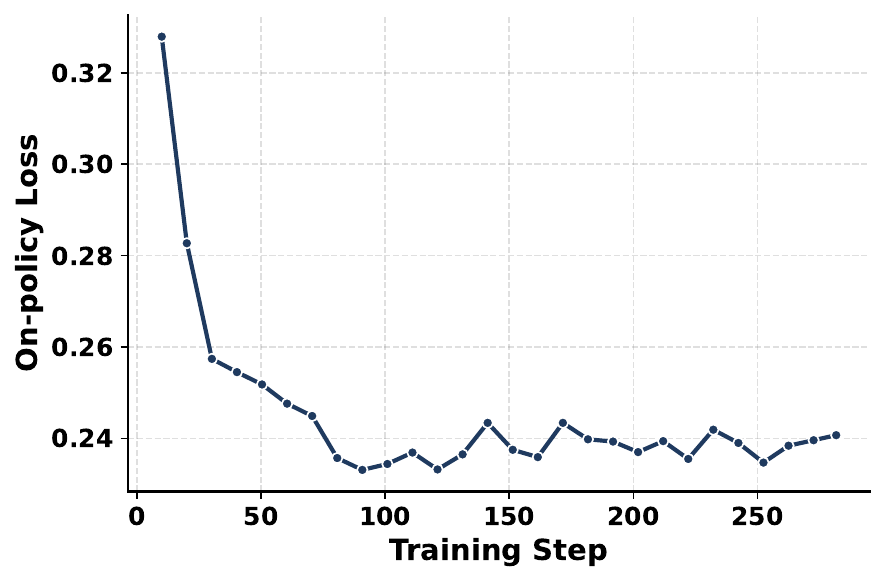}
        \caption{On-Policy Loss (Qwen3-32B)}
        \label{fig:dyn_loss_qwen}
    \end{subfigure}
    \hfill
    \begin{subfigure}{0.32\textwidth}
        \includegraphics[width=\linewidth]{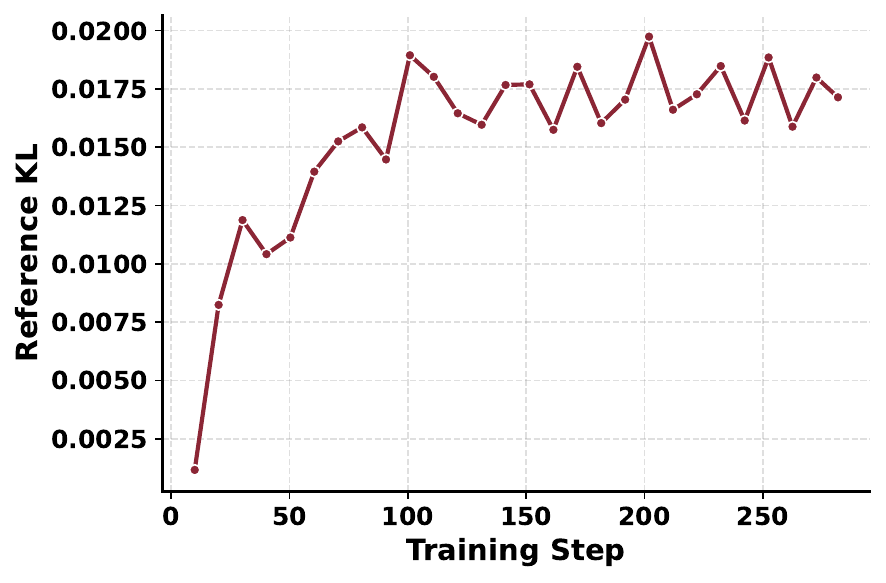}
        \caption{Reference KL (Qwen3-32B)}
        \label{fig:dyn_kl_qwen}
    \end{subfigure}
    \hfill
    \begin{subfigure}{0.32\textwidth}
        \includegraphics[width=\linewidth]{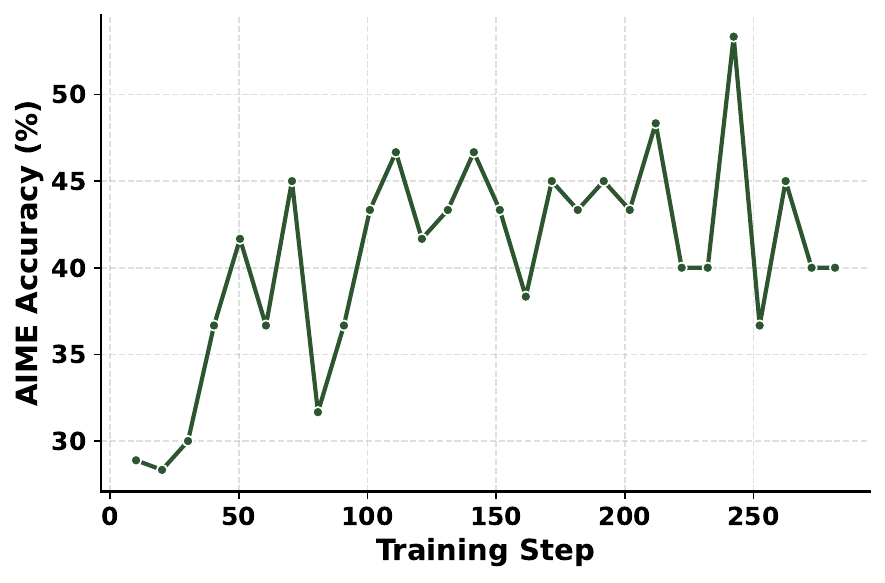}
        \caption{AIME Accuracy (Qwen3-32B)}
        \label{fig:dyn_acc_qwen}
    \end{subfigure}
    
    \vspace{0.3cm} % Vertical spacing between rows
    
    % Bottom row: Gemini-2.5-flash Teacher
    \begin{subfigure}{0.32\textwidth}
        \includegraphics[width=\linewidth]{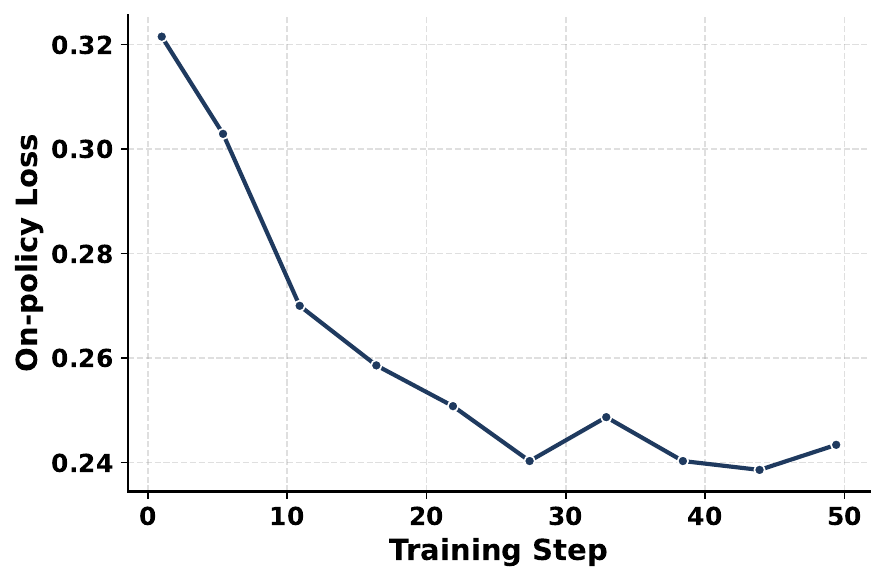}
        \caption{On-Policy Loss (Gemini-2.5-Flash)}
        \label{fig:dyn_loss_gemini}
    \end{subfigure}
    \hfill
    \begin{subfigure}{0.32\textwidth}
        \includegraphics[width=\linewidth]{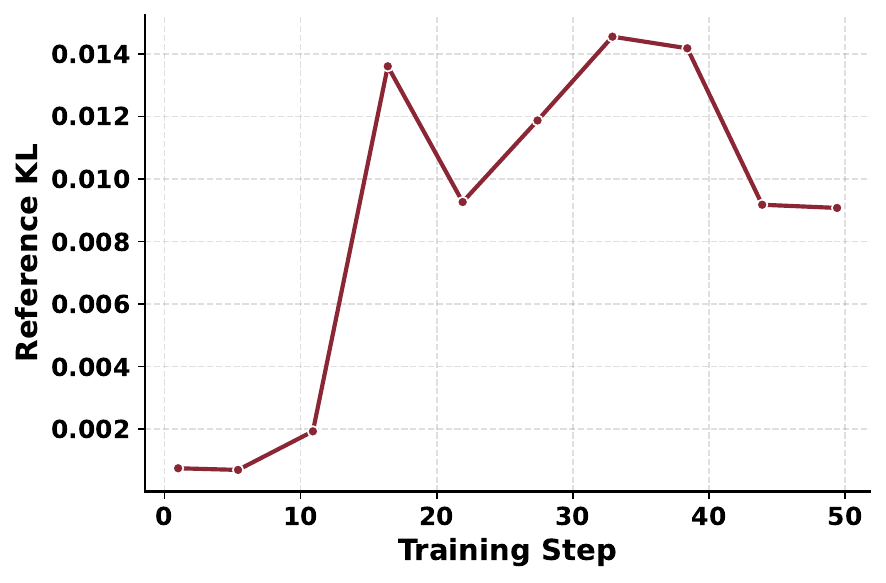}
        \caption{Reference KL (Gemini-2.5-Flash)}
        \label{fig:dyn_kl_gemini}
    \end{subfigure}
    \hfill
    \begin{subfigure}{0.32\textwidth}
        \includegraphics[width=\linewidth]{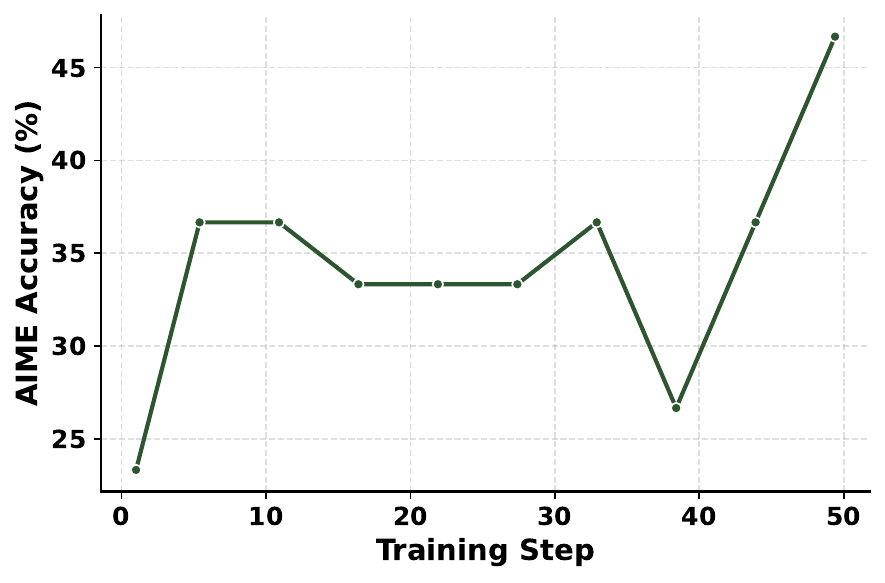}
        \caption{AIME Accuracy (Gemini-2.5-Flash)}
        \label{fig:dyn_acc_gemini}
    \end{subfigure}
    
    \caption{Training dynamics of the Qwen3-4B student distilled via \ours. The top row utilizes the open-weight Qwen3-32B teacher, while the bottom row utilizes the frontier Gemini-2.5-flash model. We track the on-policy loss (left), the base-model KL divergence penalty (center), and the continuous improvement in AIME 2025 evaluation accuracy (right) across training steps.}
    \label{fig:training_dynamics}
\end{figure*}

\section{Supplementary Results}
\subsection{Case Studies: Qualitative Analysis of Entropy-Selected Chunks}
\label{sec:appendix_case_studies}

This section presents how \ours isolates critical reasoning junctures. Rather than auditing the entire trajectory, our peak-entropy scheduling concentrates the teacher's verification budget on ``forks in the road'' where the student's policy exhibits high uncertainty.

\paragraph{Pattern Summarization: Why Entropy Works}
By analyzing the selected chunks across various domains, we identify three recurring patterns that justify the effectiveness of entropy-based selection:
\begin{itemize}
    \item \textbf{Strategic Transitions:} Spikes in entropy frequently occur when the model finishes a setup phase and must choose a specific theorem or tool (e.g., deciding between the \textit{Law of Cosines} vs. \textit{Area Formulas} in Geometry).
    \item \textbf{Boundary and Constraint Validation:} In algebraic tasks, uncertainty peaks at logical branching points, such as defining cases for absolute values or checking if a solution $x=1$ satisfies the initial domain constraints.
    \item \textbf{Combinatorial State Tracking:} In multi-step counting, entropy rises when the model must keep track of previously assigned states (e.g., vertex colors), representing a "memory bottleneck" where teacher correction is most vital.
\end{itemize}

In the following examples, we show a subset of the 10 selected chunks for three problems. The symbol \textbf{||} marks the start of the verification chunk.

\paragraph{Case 1: Geometry --- Angle Bisector}
\textit{Problem: Find AD in $\triangle ABC$ where $\sin A = 3/5$, $AB=6, AC=8$.}
\begin{itemize}
    \item \textbf{Fork \#2 (Pos: 212, Entropy: 1.16):} \texttt{... But I don't know BC yet. Maybe I can find BC using the Law of Cosines? But} \textbf{||} \texttt{wait, I know sin A = 3/5. Since angle A is acute, cos A should be positive...}
    \item \textbf{Fork \#7 (Pos: 915, Entropy: 1.21):} \texttt{...called the formula for the length of an angle bisector: AD\^{}2 = AB * AC - BD * DC.} \textbf{||} \texttt{Wait, let me check that. Let me recall Stewart's theorem. Stewart's theorem relates...}
    \item \textbf{Analysis:} The model exhibits high uncertainty exactly when it needs to verify its own memory of geometric theorems (Stewart's vs. Bisector theorem), ensuring the teacher corrects any "hallucinated" formulas.
\end{itemize}

\paragraph{Case 2: Combinatorics --- Hexagon Coloring}
\textit{Problem: Ways to color hexagon vertices with 3 colors (no adjacent same).}
\begin{itemize}
    \item \textbf{Fork \#2 (Pos: 514, Entropy: 1.69):} \texttt{...Wait, maybe not. Alternatively, using recurrence relations. But maybe I can} \textbf{||} \texttt{just trust the formula here. For n=4, the formula gives 18 for k=3. Let me check...}
    \item \textbf{Fork \#10 (Pos: 1383, Entropy: 1.50):} \texttt{...vertex 4 is not adjacent to vertex 2. So} \textbf{||} \texttt{if vertex 4 is B, is that okay? Since vertex 4 is adjacent to vertex 3 (A) and vertex 1...}
    \item \textbf{Analysis:} Entropy selection captures the model's struggle with cyclic constraints. It identifies the "wrap-around" check (vertex 6 vs vertex 1) as the highest-entropy region, which is the most common point of failure for smaller models.
\end{itemize}

\paragraph{Case 3: Algebra --- Absolute Value Equations}
\textit{Problem: Solve $|x^2 - 3x + 2| = x - 1$.}
\begin{itemize}
    \item \textbf{Fork \#3 (Pos: 395, Entropy: 1.56):} \texttt{... x\^{}2 - 4x + 3 = 0.} \textbf{||} \texttt{Factorizing this quadratic equation: Looking for two numbers that multiply to 3 and add to -4...}
    \item \textbf{Fork \#6 (Pos: 837, Entropy: 1.64):} \texttt{...absolute value is A when A is non-negative. Therefore,} \textbf{||} \texttt{the solutions from Case 1 must satisfy A >= 0. So, let's check for x = 1 and x = 3...}
    \item \textbf{Fork \#10 (Pos: 1381, Entropy: 1.58):} \texttt{...x = 1 is a solution to both equations. However,} \textbf{||} \texttt{we need to check if x = 1 is a solution to the original equation because...}
    \item \textbf{Analysis:} The selection focuses heavily on the \textit{verification} of solutions. While the calculation of $x^2-4x+3$ is relatively low-entropy, the \textit{logical validity} check (testing if the results fit Case 1 or Case 2) triggers a spike, allowing \ours to enforce rigorous case-split logic.
\end{itemize}

\end{document}